\documentclass[12pt, a4paper]{article}
\usepackage[a4paper,
top=2.5cm, bottom=2.5cm,
left=2.8cm, right=2.8cm,
headheight=14pt]{geometry}
\usepackage[utf8]{inputenc}
\usepackage[T1]{fontenc}
\usepackage{amsmath,amssymb,amsthm,amsfonts,mathtools}
\usepackage{booktabs,array}
\usepackage{xcolor}
\usepackage{tikz}
\usetikzlibrary{positioning,arrows.meta,calc,shapes.geometric,fit,backgrounds,decorations.pathreplacing,matrix}
\usepackage{enumitem}
\usepackage{fancyhdr}
\usepackage{hyperref}
\hypersetup{colorlinks=true,linkcolor=blue!60!black,citecolor=blue!60!black,urlcolor=blue!60!black}
\usepackage{cleveref}
\theoremstyle{plain}
\newtheorem{theorem}{Theorem}[section]
\newtheorem{proposition}[theorem]{Proposition}
\newtheorem{lemma}[theorem]{Lemma}
\newtheorem{corollary}[theorem]{Corollary}
\theoremstyle{definition}
\newtheorem{definition}[theorem]{Definition}
\newtheorem{example}[theorem]{Example}
\theoremstyle{remark}
\newtheorem{remark}[theorem]{Remark}
\newtheorem{assumption}[theorem]{Assumption}
\crefname{assumption}{Assumption}{Assumptions}
\Crefname{assumption}{Assumption}{Assumptions}
\DeclareMathOperator{\diag}{diag}

\newcommand{\R}{\mathbb{R}}
\newcommand{\NN}{\mathcal{A}}
\newcommand{\FF}{\mathcal{F}}
\newcommand{\FFA}{\mathcal{F}^{*}}
\newcommand{\had}{\odot}
\newcommand{\transp}{{}^{\top}}
\newcommand{\loss}{J}
\newcommand{\pp}[2]{\dfrac{\partial #1}{\partial #2}}
\newcommand{\cB}{\mathcal{B}}
\newcommand{\X}{\mathbf{X}}
\newcommand{\Y}{\mathbf{Y}}
\newcommand{\Xs}{\mathbf{X}_{*}}
\newcommand{\Ys}{\mathbf{Y}_{*}}
\newcommand{\bG}{\mathbf{G}}
\newcommand{\norm}[1]{\left\lVert#1\right\rVert}
\newcommand{\Sigmap}{\mathbf{\Sigma}'}
\newcommand{\Dell}{\mathbf{D}}
\title{Backpropagation as a Nilpotent Linear System}
\author{%
	\sc  Ahmed Boughammoura\thanks{Correspondence to   ahmed.boughammoura@gmail.com}
	\\
	{\small \sc Higher Institute of Informatics and Mathematics of Monastir, }\\ {\small\sc University of Monastir, 5000 Monastir, Tunisia.}  
}

\date{\today}
\begin{document}
\maketitle
\thispagestyle{fancy}
\begin{abstract}
\noindent
Backpropagation is the computational engine of deep learning, yet its
mathematical structure is typically treated as a procedural traversal of
computational graphs. We present a global operator theory of the
\emph{F-adjoint} framework, which reformulates the layerwise backward
recursion of an $L$-depth feedforward network into a single linear system
$(I-\cB)\Xs=\bG$, where $\bG$ is a source vector. We prove that the global backward operator $\cB$ is
strictly block upper-triangular and nilpotent of index at most $L$. This
nilpotency guarantees the exact termination of the Neumann series solution
after at most $L$ terms, revealing classical backpropagation to be
mathematically equivalent to block back-substitution on an upper bidiagonal
system. We formalise \emph{F-symmetry}---the condition in which the backward pass
perfectly mirrors the forward pass---identifying orthogonal weight
matrices as canonical examples. Through worked numerical examples,
we demonstrate how this operator perspective exposes the single-path
collapse of strictly feedforward networks and its breakdown in residual
architectures. Finally, we leverage this compositional structure to
rigorously derive the mechanics of residual networks (gradient highways)
and transfer learning (gradient truncation). This framework elevates
backpropagation from an algorithmic recipe to a global nilpotent-operator
formulation.
\end{abstract}
\noindent\textbf{Keywords.}
Artificial neural networks, backpropagation, F-adjoint, nilpotent operator,
Neumann series, block back-substitution.
\bigskip
% ---------------------------------------------------------------
\section{Introduction}\label{sec:intro}
% ---------------------------------------------------------------
An \emph{artificial neural network} (ANN) is a parametric function inspired
by biological neural systems~\cite{mcculloch1943}. In a fully connected,
layered feedforward architecture, the network maps an input vector
$x\in\R^{N_0}$ to an output vector $f(x)\in\R^{N_L}$ through a cascade of
affine and nonlinear transformations indexed by depth $\ell=1,\ldots,L$,
and is widely used for classification, pattern recognition, and
multivariate data analysis~\cite{basheer2000artificial}. The standard
training procedure is \emph{backpropagation}~\cite{rumelhart1986}, which
computes the gradient of a loss function $\loss(f(x),y)$ with respect to
every weight matrix $W^{(\ell)}$ by a recursive application of the chain
rule in reverse order through the layers. Backpropagation was popularised
by~\cite{rumelhart1986} and independently rediscovered in the specific
context of convolutional recognition systems~\cite{lecun1989backpropagation};
it remains the computational backbone of modern deep
learning~\cite{goodfellow2016deep}.
Despite its ubiquity, the mathematical structure underlying backpropagation
has traditionally received a fairly informal, procedural treatment:
gradients flow backward from the output layer, with each layer's adjoint
obtained from its successor by the chain rule. This view is computationally
efficient and pedagogically transparent, but it obscures the global
algebraic structure of the entire backward pass. Automatic differentiation
(AD) theory situates backpropagation within the broader class of
reverse-mode adjoint computations on computational
graphs~\cite{griewank2008evaluating,baydin2018automatic}, and adjoint
methods have a long history in optimal control and PDE-constrained
optimisation~\cite{pontryagin1987mathematical,hinze2008optimization}.
While reverse-mode AD provides a general framework for computing gradients
on arbitrary computational graphs, our contribution is the explicit
identification of the nilpotent operator structure specific to layered
networks. More recently, global formulations of network dynamics have been
explored for neural ordinary differential equations~\cite{chen2018neural}
and deep equilibrium models~\cite{bai2019deep}, both of which treat the
whole network as a single mathematical object. However, none of these
approaches fully exploits the specific block structure induced by a
\emph{finite-depth, layered} feedforward architecture.
Boughammoura~\cite{boughammoura2023two,boughammoura2023,boughammoura2024learning} filled this gap
by introducing the notion of \emph{F-propagation} and \emph{F-adjoint},
which rephrase the backward pass in a two-step recursive form perfectly
symmetric to the two-step form of the forward pass. The central theorem of
that work states that \emph{backpropagation is the F-adjoint of the
F-propagation}. The present paper builds on this layerwise result and
develops it into a \emph{global operator} theory: stacking every layer's
adjoint variables into a single vector $\Xs$, the layerwise F-adjoint recursion
becomes a single linear fixed-point equation $(I-\cB)\Xs=\bG$ governed by a
nilpotent operator $\cB$ and an injected source vector $\bG$. This reformulation makes the triangular,
finite-termination character of backpropagation explicit and provides a
uniform vantage point from which to derive worked examples, structural
symmetry results, and architecture-level applications.
\paragraph{Contributions and scope.}
This paper is a focused, self-contained exposition of the F-propagation
and F-adjoint framework, restricted to four themes:
\begin{enumerate}[leftmargin=*,label=(\roman*)]
\item a rigorous statement of the F-propagation/F-adjoint definitions
and the theorem identifying backpropagation with the F-adjoint of the
F-propagation (\Cref{sec:methodology});
\item a global operator formulation of the same statement---the
linear system $(I-\cB)\Xs=\bG$, the nilpotency of $\cB$, the Neumann
series solution, and its equivalence with block back-substitution and
with the classical layerwise recursion, together with a global
treatment of F-symmetric networks (\Cref{sec:methodology});
\item fully worked, numerically verified examples that make every
object above concrete (\Cref{sec:results});
\item two applications of the compositional structure of the
F-adjoint: residual networks and transfer learning (\Cref{sec:applications}).
\end{enumerate}
Material outside this scope---in particular the equilibrium F-adjoint
dynamics, local biologically plausible learning rules, and their
experimental evaluation~\cite{boughammoura2024learning}---lies beyond the
present paper and is not treated here.
\paragraph{Organisation.}
\Cref{sec:methodology} develops the methodology: notation and network
architecture (\Cref{subsec:notation}), the two-step rule for
backpropagation (\Cref{subsec:twostep}), the F-propagation and F-adjoint
definitions together with the main equivalence theorem
(\Cref{subsec:fadjoint-def}), the global operator construction and its
nilpotency (\Cref{subsec:global}), the block back-substitution and path
decomposition viewpoint (\Cref{subsec:backsub}), and the global theory of
F-symmetry (\Cref{subsec:fsym}). \Cref{sec:results} presents worked and
comparative examples. \Cref{sec:applications} discusses two applications
of the compositional structure. \Cref{sec:discussion} draws out structural
consequences of the equivalence, explicitly identifies assumptions and
limitations, and connects the framework to neighbouring theories.
\Cref{sec:conclusion} concludes.
% ---------------------------------------------------------------
\section{Methodology}\label{sec:methodology}
% ---------------------------------------------------------------
\subsection{Notation and network architecture}\label{subsec:notation}
Every vector $X\in\R^n$ is understood as a \emph{column} vector,
$X=(X_1,\ldots,X_n)\transp$. For a coordinate-wise activation
$\sigma:\R^n\to\R^n$ built from scalar maps $\sigma_i:\R\to\R$,
\begin{equation}\label{eq:sigma-def}
\sigma(X) := \bigl(\sigma_1(X_1),\ldots,\sigma_n(X_n)\bigr)\transp,
\end{equation}
so $[\sigma(X)]_i=\sigma_i(X_i)$, and the derivative $\sigma'(Y)\in\R^n$ is
defined component-wise, $[\sigma'(Y)]_i=\sigma_i'(Y_i)$. We write $A\had B$
for the Hadamard (element-wise) product of two vectors or matrices of the
same shape. \Cref{tab:notation} summarises the network-level symbols used
throughout.
\begin{table}[htp]
\centering
\def\arraystretch{1.3}
\begin{tabular}{|l|l|}
\hline
\textbf{Symbol} & \textbf{Description} \\
\hline
$W^{(\ell)}$
& Weight matrix of layer $\ell$, $W^{(\ell)}\in\R^{N_{\ell}\times N_{\ell-1}}$ \\
\hline
$Y^{(\ell)}$
& Pre-activation vector of layer $\ell$, $Y^{(\ell)} = W^{(\ell)}X^{(\ell-1)}\in\R^{N_{\ell}}$ \\
\hline
$X^{(\ell)}$
& Post-activation vector of layer $\ell$, $X^{(\ell)} = \sigma(Y^{(\ell)})\in\R^{N_{\ell}}$ \\
\hline
$X_*^{(\ell)}$
& Adjoint activation of layer $\ell$, $X_*^{(\ell)}=\partial\loss/\partial X^{(\ell)}\in\R^{N_{\ell}}$ \\
\hline
$Y_*^{(\ell)}$
& Adjoint pre-activation of layer $\ell$, $Y_*^{(\ell)}=\partial\loss/\partial Y^{(\ell)}\in\R^{N_{\ell}}$ \\
\hline
$\sigma$
& Pointwise activation function, $\sigma:\R\to\R$ \\
\hline
\end{tabular}
\caption{Notation for a fully connected feedforward network.}
\label{tab:notation}
\end{table}
We consider fully connected feedforward neural networks with $L$ layers,
denoted
\begin{equation}\label{eq:arch}
\NN[N_0, N_1, \ldots, N_L],
\end{equation}
where $N_0$ is the input dimension, $N_\ell$ is the width of hidden layer
$\ell$ ($\ell=1,\ldots,L-1$), and $N_L$ is the output dimension; $L$ is the
\emph{depth} of the network. For each layer $\ell=1,\ldots,L$, the weight
matrix is $W^{(\ell)}\in\R^{N_\ell\times N_{\ell-1}}$, and the forward pass
is
\begin{equation}\label{eq:forward}
X^{(0)}=x,\qquad
Y^{(\ell)} = W^{(\ell)} X^{(\ell-1)},\qquad
X^{(\ell)} = \sigma(Y^{(\ell)}),\qquad \ell=1,\ldots,L,
\end{equation}
and $f(x):=X^{(L)}$ is the network output. Training minimises a differentiable
loss $\loss(f(x),y)$, where $y\in\R^{N_L}$ is a target.
\begin{remark}[Bias absorption]\label{rem:bias}
Biases can be incorporated into the weight matrices by the
augmented-input convention: set $X^{(\ell)} =
(X^{(\ell)}_1,\ldots,X^{(\ell)}_{N_\ell-1},1)\transp$ for
$0\le\ell\le L-1$, with the last activation held fixed and equal to
$1$. For notational simplicity we work throughout with the already
bias-absorbed form $W^{(\ell)}\in\R^{N_\ell\times N_{\ell-1}}$
of~\eqref{eq:forward}.
\end{remark}
\subsection{Derivative conventions}\label{subsec:deriv}
Following~\cite{boughammoura2023}, we use the \emph{denominator-layout}
convention throughout (see~\cite{ye2022} for background). For a
differentiable map $F:\R\to\R^{m\times n}$,
\begin{equation}\label{eq:dFdx}
\pp{F}{x}
= \left(\pp{F_{ij}(x)}{x}\right)_{\substack{1\le i\le m\\1\le j\le n}}.
\end{equation}
For a differentiable scalar $F:\R^{m\times n}\to\R$,
\begin{equation}\label{eq:dFdX}
\pp{F}{X}
= \left(\pp{F(X)}{X_{ij}}\right)_{\substack{1\le i\le m\\1\le j\le n}}
\in\R^{m\times n}.
\end{equation}
Let $W\in\R^{q\times m}$ and $X\in\R^m$. These conventions give the
essential identities
\begin{equation}\label{eq:id1}
\pp{(WX)}{X} = W\transp, \qquad
\pp{F}{W} = \pp{F}{Z}\,X\transp,
\quad Z:=WX,\;F\text{ scalar-valued.}
\end{equation}
The chain rule then gives, for $F:(W,X)\mapsto Z:=WX\mapsto F(Z)$,
\begin{equation}\label{eq:chain-wx}
\pp{F}{X} = W\transp\pp{F}{Z}
\qquad\text{and}\qquad
\pp{F}{W} = \pp{F}{Z}\,X\transp;
\end{equation}
and for $F:\R^n\ni Y\mapsto X:=\sigma(Y)\mapsto F(X)$,
\begin{equation}\label{eq:chain-sigma}
\pp{F}{Y} = \pp{F}{X}\had\sigma'(Y).
\end{equation}
\subsection{The two-step rule for backpropagation}\label{subsec:twostep}
Following~\cite{boughammoura2023}, we introduce the shorthand
\begin{equation}\label{eq:star-notation}
X^{(\ell)}_* := \pp{\loss(f(x),y)}{X^{(\ell)}},\quad
Y^{(\ell)}_* := \pp{\loss(f(x),y)}{Y^{(\ell)}},\quad
W^{(\ell)}_* := \pp{\loss(f(x),y)}{W^{(\ell)}}.
\end{equation}
\begin{theorem}[Two-step rule for backpropagation
{\cite{boughammoura2023two,boughammoura2023}}]
\label{thm:twostep}
Let $\NN[N_0,\ldots,N_L]$ be a deep neural network with weight
matrices $W^{(\ell)}\in\R^{N_\ell\times N_{\ell-1}}$ and pointwise
differentiable activation $\sigma$. Define the forward pass
by~\eqref{eq:forward}, let $\loss(f(x),y)$ be a differentiable loss
with $f(x)=X^{(L)}$, and set
\begin{equation}\label{eq:init-backward}
X^{(L)}_* = \pp{\loss(f(x),y)}{X^{(L)}}.
\end{equation}
Then for $\ell=L,L-1,\ldots,1$:
\begin{align}
Y^{(\ell)}_* &= X^{(\ell)}_*\had\sigma'(Y^{(\ell)}), \label{eq:back1}\\
X^{(\ell-1)}_* &= (W^{(\ell)})\transp Y^{(\ell)}_*, \label{eq:back2}
\end{align}
and the weight gradient is
\begin{equation}\label{eq:weight-grad}
W^{(\ell)}_* = Y^{(\ell)}_*(X^{(\ell-1)})\transp.
\end{equation}
\end{theorem}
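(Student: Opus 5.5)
The plan is to apply the two chain-rule identities \eqref{eq:chain-wx} and \eqref{eq:chain-sigma} once per layer, with a downward induction on $\ell$. The key observation is that the loss depends on earlier quantities only through the later ones. For each $\ell$ we view $\loss(f(x),y)$ as a composite function in two ways. Write $\Phi_\ell$ for the map sending $X^{(\ell)}$ to the loss through layers $\ell+1,\ldots,L$, and $\Psi_\ell$ for the map sending $Y^{(\ell)}$ to the loss. Then $\Psi_\ell=\Phi_\ell\circ\sigma$, and, as a function of the pair $(W^{(\ell)},X^{(\ell-1)})$, the loss equals $\Psi_\ell(W^{(\ell)}X^{(\ell-1)})$. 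The forward recursion \eqref{eq:forward} guarantees that these factorisations hold exactly, because $Y^{(\ell)}$ is determined by $X^{(\ell-1)}$ and $W^{(\ell)}$ alone. Later weights $W^{(k)}$, $k>\ell$, enter only as fixed parameters of $\Phi_\ell$.

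The induction runs downward. The base case is \eqref{eq:init-backward}, which is simply the definition of $X^{(L)}_*$. Assume $X^{(\ell)}_*=\partial\loss/\partial X^{(\ell)}$ is known.

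\begin{enumerate}[leftmargin=*,label=(\alph*)]
\item Apply \eqref{eq:chain-sigma} to $F=\Phi_\ell$ and $Y=Y^{(\ell)}$. Since $\sigma$ is coordinate-wise, its Jacobian is $\diag(\sigma'(Y^{(\ell)}))$, and we obtain $Y^{(\ell)}_*=X^{(\ell)}_*\had\sigma'(Y^{(\ell)})$, which is \eqref{eq:back1}.
\item Apply \eqref{eq:chain-wx} with $F=\Psi_\ell$, $W=W^{(\ell)}$, $X=X^{(\ell-1)}$ and $Z=Y^{(\ell)}$. This gives $\partial\loss/\partial X^{(\ell-1)}=(W^{(\ell)})\transp Y^{(\ell)}_*$, which is \eqref{eq:back2}. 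It also gives $\partial\loss/\partial W^{(\ell)}=Y^{(\ell)}_*(X^{(\ell-1)})\transp$, which is \eqref{eq:weight-grad}.
\end{enumerate}

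The output of step (b) is exactly the induction hypothesis at level $\ell-1$. The recursion therefore closes, and it terminates at $\ell=1$ with $X^{(0)}_*$, which is not needed but is well defined.

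The main obstacle is a subtlety rather than a computation: justifying the weight-gradient formula \eqref{eq:weight-grad}. The weight $W^{(\ell)}$ must affect the loss only through $Y^{(\ell)}$, and this is where the layered, feedforward structure is essential, since no weight is shared across layers. One also needs the partial derivative with respect to $W^{(\ell)}$ to be taken with $X^{(\ell-1)}$ held fixed. This is legitimate because $X^{(\ell-1)}$ depends only on $x$ and $W^{(1)},\ldots,W^{(\ell-1)}$.

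A further point concerns the bias convention of \Cref{rem:bias}. There, the constant coordinate of $X^{(\ell)}$ is not produced by $\sigma$. I would handle this by noting that \eqref{eq:back1} is only ever used to compute $Y^{(\ell)}_*$ for genuine pre-activations, so the dummy coordinate plays no role in the recursion. Finally, I would check that the identities in \eqref{eq:id1} are consistent with the denominator layout: the derivative is written with $\partial F/\partial Z$ first, and the transposes come out as stated.
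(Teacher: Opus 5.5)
Your proposal is correct and is essentially the paper's proof: for each fixed $\ell$, the paper applies \eqref{eq:chain-sigma} to obtain \eqref{eq:back1}, and applies \eqref{eq:chain-wx} to obtain both \eqref{eq:back2} and \eqref{eq:weight-grad}, exactly as in your steps (a) and (b). Your maps $\Phi_\ell,\Psi_\ell$ just make explicit the factorisation of $\loss$ through the next layer that the paper uses tacitly, and your downward induction is harmless but not strictly needed, since each $X^{(\ell)}_*$ is already defined as a partial derivative in \eqref{eq:star-notation}.
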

\begin{proof}
Fix $\ell\in\{1,\ldots,L\}$.
\noindent\textit{Step 1 (pre-activation gradient).}
Since $X^{(\ell)}=\sigma(Y^{(\ell)})$, identity~\eqref{eq:chain-sigma} gives
$Y^{(\ell)}_* = X^{(\ell)}_*\had\sigma'(Y^{(\ell)})$, establishing~\eqref{eq:back1}.
\noindent\textit{Step 2 (activation gradient).}
Since $Y^{(\ell)} = W^{(\ell)} X^{(\ell-1)}$, identity~\eqref{eq:chain-wx}
gives $X^{(\ell-1)}_* = (W^{(\ell)})\transp Y^{(\ell)}_*$, which is~\eqref{eq:back2}.
\noindent\textit{Step 3 (weight gradient).}
By~\eqref{eq:chain-wx}, $W^{(\ell)}_* = Y^{(\ell)}_*(X^{(\ell-1)})\transp$,
giving~\eqref{eq:weight-grad}.
\end{proof}
\begin{remark}[Forward--backward symmetry]\label{rem:two-step-symmetry}
Equations~\eqref{eq:forward} and~\eqref{eq:back1}--\eqref{eq:back2}
are structurally parallel. The forward pass produces the ordered
family $(X^{(0)},Y^{(1)}, \ldots,Y^{(L)},X^{(L)})$
via the alternating maps $W^{(\ell)}(\cdot)$ and $\sigma(\cdot)$,
while the backward pass produces the ordered family
$(X^{(L)}_*,Y^{(L)}_*,\ldots,Y^{(1)}_*,X^{(0)}_*)$
via the adjoint alternating maps $(\cdot)\had\sigma'(Y^{(\ell)})$
and $(W^{(\ell)})\transp(\cdot)$. This structural duality is the
motivating observation behind the F-adjoint framework.
\end{remark}
\subsection{F-propagation and F-adjoint}\label{subsec:fadjoint-def}
\begin{definition}[F-propagation {\cite{boughammoura2023,boughammoura2024learning}}]
\label{def:fprop}
Let $X^{(0)}\in\R^{N_0}$ be the input, $\sigma$ a coordinate-wise
map, and $W^{(\ell)}\in\R^{N_\ell\times N_{\ell-1}}$ for all
$1\le\ell\le L$. The \emph{F-propagation} through
$\NN[N_0,\ldots,N_L]$ is the ordered family of vectors
\begin{equation}\label{eq:fprop}
\FF :=
\bigl(X^{(0)},\;Y^{(1)},\;X^{(1)},\;\ldots,\;X^{(L-1)},\;Y^{(L)},\;X^{(L)}\bigr)
\end{equation}
satisfying, for $\ell=1,\ldots,L$,
\begin{equation}\label{eq:fprop-rec}
Y^{(\ell)} = W^{(\ell)} X^{(\ell-1)},\qquad
X^{(\ell)} = \sigma(Y^{(\ell)}).
\end{equation}
\end{definition}
\begin{definition}[F-adjoint {\cite{boughammoura2023,boughammoura2024learning}}]
\label{def:fadjoint}
Let $X^{(L)}_*\in\R^{N_L}$ be a given seed and let $\FF$ be an
F-propagation through $\NN[N_0,\ldots,N_L]$. The \emph{F-adjoint}
$\FFA$ of $\FF$ with seed $X^{(L)}_*$ is the ordered family
\begin{equation}\label{eq:fadjoint}
\FFA :=
\bigl(X^{(L)}_*,\;Y^{(L)}_*,\;X^{(L-1)}_*,\;\ldots,\;
X^{(1)}_*,\;Y^{(1)}_*,\;X^{(0)}_*\bigr),
\end{equation}
satisfying, for $\ell=L,L-1,\ldots,1$,
\begin{equation}\label{eq:fadjoint-rec}
Y^{(\ell)}_* = X^{(\ell)}_*\had\sigma'(Y^{(\ell)}),\qquad
X^{(\ell-1)}_* = (W^{(\ell)})\transp Y^{(\ell)}_*.
\end{equation}
\end{definition}
\begin{figure}[htp]
\centering
\begin{tikzpicture}[
node distance=1.5cm and 2.5cm,
A/.style={draw, rounded corners, minimum width=3.2cm,
minimum height=0.8cm, align=center},
every edge/.style={->, >=Stealth, draw, thick}
]
\node[A] (Y)     {$Y^{(\ell)} = W^{(\ell)} X^{(\ell-1)}$};
\node[A, below=0.8cm of Y]  (Xprev)
{$X_*^{(\ell-1)}=(W^{(\ell)})\transp Y_*^{(\ell)}$};
\node[A, right=2.3cm of Y]    (X)
{$X^{(\ell)}=\sigma(Y^{(\ell)})$};
\node[A, below=0.8cm of X]  (Ystar)
{$Y_*^{(\ell)}=X_*^{(\ell)}\had \sigma'(Y^{(\ell)})$};
\draw (Y)     edge node[midway,left,swap]{\scriptsize F-adjoint step} (Xprev);
\draw (X)     edge node[midway,right]{\scriptsize F-adjoint step} (Ystar);
\draw (Y.east) edge node[above,bend left=15]{\scriptsize F-pass} (X.west);
\draw (Ystar.west) edge node[below,bend left=15]{\scriptsize $\mathrm{F}_*$-pass}
(Xprev.east);
\end{tikzpicture}
\caption{Duality between F-propagation and F-adjoint at a single layer $\ell$.}
\label{fig:Fpasses}
\end{figure}
\begin{table}[h]
\centering
\renewcommand{\arraystretch}{1.4}
\caption{Structural duality between F-propagation and F-adjoint.}
\label{tab:fw-bw-duality}
\begin{tabular}{@{}lll@{}}
\toprule
\textbf{Feature}
& \textbf{F-propagation} $\FF$
& \textbf{F-adjoint} $\FFA$ \\
\midrule
Direction       & $\ell=1\to L$    & $\ell=L\to 1$  \\
State sequence  &
$(X^{(0)},Y^{(1)},X^{(1)},\ldots,Y^{(L)},X^{(L)})$  &
$(X^{(L)}_*,Y^{(L)}_*,X^{(L-1)}_*,\ldots,Y^{(1)}_*,X^{(0)}_*)$ \\
Affine step     &
$Y^{(\ell)}=W^{(\ell)}X^{(\ell-1)}$  &
$X^{(\ell-1)}_*=(W^{(\ell)})\transp Y^{(\ell)}_*$ \\
Nonlinear step  &
$X^{(\ell)}=\sigma(Y^{(\ell)})$  &
$Y^{(\ell)}_*=X^{(\ell)}_*\had\sigma'(Y^{(\ell)})$ \\
Initialisation  & $X^{(0)}=x$     & $X^{(L)}_*=\partial\loss/\partial X^{(L)}$ \\
Weight update   & ---         & $W^{(\ell)}_*=Y^{(\ell)}_*(X^{(\ell-1)})\transp$ \\
\bottomrule
\end{tabular}
\end{table}
\begin{theorem}[Backpropagation is the F-adjoint of the F-propagation
{\cite{boughammoura2023,boughammoura2024learning}}]
\label{thm:backprop-fadjoint}
Let $\NN[N_0,\ldots,N_L]$, $\FF$, and $\loss(f(x),y)$ be as in
\Cref{def:fprop}. Set $X^{(L)}_*=\partial\loss/\partial
X^{(L)}$ and let $\FFA$ be the F-adjoint with this seed. Then for
all $\ell=1,\ldots,L$,
\begin{equation}\label{eq:main-result}
\pp{\loss(f(x),y)}{W^{(\ell)}}
= Y^{(\ell)}_*\,(X^{(\ell-1)})\transp.
\end{equation}
In other words, the backpropagation pass associated with $\loss$ is
completely determined by the F-adjoint $\FFA$ of the F-propagation
$\FF$.
\end{theorem}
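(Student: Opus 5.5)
The plan is to reduce the statement to \Cref{thm:twostep} by showing that the F-adjoint $\FFA$ with seed $X^{(L)}_*=\partial\loss/\partial X^{(L)}$ reproduces, entry by entry, the true partial derivatives of $\loss$. Concretely, I will prove by downward induction on $\ell$ that for every $\ell=L,L-1,\ldots,1$ the vectors $Y^{(\ell)}_*$ and $X^{(\ell-1)}_*$ produced by the recursion \eqref{eq:fadjoint-rec} coincide with $\partial\loss/\partial Y^{(\ell)}$ and $\partial\loss/\partial X^{(\ell-1)}$ respectively. Here $\loss$ is regarded as a function of the intermediate variable through the forward map \eqref{eq:fprop-rec}, and these derivatives are evaluated along the F-propagation $\FF$.

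The base case is the seed itself: $X^{(L)}_*$ is by hypothesis $\partial\loss/\partial X^{(L)}$. For the inductive step, assume $X^{(\ell)}_*=\partial\loss/\partial X^{(\ell)}$. Since $\loss$ depends on $Y^{(\ell)}$ only through $X^{(\ell)}=\sigma(Y^{(\ell)})$, the chain rule \eqref{eq:chain-sigma} gives
\[
\partial\loss/\partial Y^{(\ell)}=X^{(\ell)}_*\had\sigma'(Y^{(\ell)}),
\]
which is exactly the first F-adjoint step. Likewise, $\loss$ depends on $X^{(\ell-1)}$ only through $Y^{(\ell)}=W^{(\ell)}X^{(\ell-1)}$, so \eqref{eq:chain-wx} gives
\[
\partial\loss/\partial X^{(\ell-1)}=(W^{(\ell)})\transp Y^{(\ell)}_*,
\]
which is the second step. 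This is precisely the content of \eqref{eq:back1}--\eqref{eq:back2} in \Cref{thm:twostep}, so I would simply observe that the F-adjoint recursion \eqref{eq:fadjoint-rec} is identical, symbol for symbol, to the backward recursion of \Cref{thm:twostep} with the same initialisation. By uniqueness of the solution of a recursion with a fixed seed, the two families agree.

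Finally, for the weight gradient, note that $\loss$ depends on $W^{(\ell)}$ only through $Y^{(\ell)}=W^{(\ell)}X^{(\ell-1)}$, with $X^{(\ell-1)}$ independent of $W^{(\ell)}$ because it is computed from earlier layers. The second identity in \eqref{eq:chain-wx} then yields $\partial\loss/\partial W^{(\ell)}=Y^{(\ell)}_*(X^{(\ell-1)})\transp$, which is \eqref{eq:weight-grad} and hence \eqref{eq:main-result}.

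The only delicate point is this independence claim, together with the related claim that $\loss$ depends on $X^{(\ell-1)}$ only through $Y^{(\ell)}$. Both rely on the strictly layered, single-path structure of \eqref{eq:forward}: there are no skip connections and no weight sharing across layers. I would state this explicitly, since it is exactly what fails for residual architectures later in the paper. Everything else is routine bookkeeping.
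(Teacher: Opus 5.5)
Your proposal is correct and follows the same route as the paper: the paper simply notes that the F-adjoint recursion \eqref{eq:fadjoint-rec} is the two-step backpropagation recursion of \Cref{thm:twostep} with the same seed, so the two families coincide and \eqref{eq:weight-grad} gives \eqref{eq:main-result}. Your downward induction, and your remark that the strictly layered structure is what makes the chain-rule steps valid, just make explicit the uniqueness step that the paper leaves implicit.
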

\begin{proof}
By \Cref{def:fadjoint}, the F-adjoint satisfies the same recurrence
as the classical backpropagation system of \Cref{thm:twostep} with
the same initialisation. Hence they coincide.
\end{proof}
\begin{corollary}[F-adjoint and classical delta errors]
\label{cor:delta-fadjoint}
Under the hypotheses of \Cref{thm:backprop-fadjoint}, the quantities
$Y^{(\ell)}_*$ in the F-adjoint coincide with the classical delta
errors $\delta^{(\ell)}$ of the generalised delta rule: $\delta^{(\ell)}
= Y^{(\ell)}_*$ for all $\ell=1,\ldots,L$.
\end{corollary}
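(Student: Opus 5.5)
The plan is to identify both sides with the same quantity, namely the partial derivative $\partial\loss/\partial Y^{(\ell)}$, and then use uniqueness of the backward recursion.

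\emph{Step 1: fix the definition of the delta errors.} The generalised delta rule of~\cite{rumelhart1986} defines the error of layer $\ell$ as the sensitivity of the loss to the pre-activation, $\delta^{(\ell)} := \partial\loss(f(x),y)/\partial Y^{(\ell)}$. It then computes these errors by the recursion
\begin{equation*}
\delta^{(L)} = \pp{\loss}{X^{(L)}}\had\sigma'(Y^{(L)}),\qquad
\delta^{(\ell)} = \bigl((W^{(\ell+1)})\transp\delta^{(\ell+1)}\bigr)\had\sigma'(Y^{(\ell)}).
\end{equation*}
I will state this convention explicitly, since the corollary is only meaningful once it is pinned down.

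\emph{Step 2: identify the F-adjoint with the true derivative.} By \Cref{thm:backprop-fadjoint}, and more precisely by the argument in its proof, the F-adjoint with seed $X^{(L)}_*=\partial\loss/\partial X^{(L)}$ obeys the same recurrence and the same initialisation as the two-step system of \Cref{thm:twostep}. Each quantity in that system is determined uniquely from its predecessor, so a downward induction on $\ell$ from $L$ to $1$ gives $Y^{(\ell)}_* = \partial\loss/\partial Y^{(\ell)}$ for every $\ell$. The base case is \eqref{eq:back1} with $\ell=L$. The inductive step combines \eqref{eq:back2} at layer $\ell+1$ with \eqref{eq:back1} at layer $\ell$.

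\emph{Step 3: match the recursions.} Substituting $X^{(\ell)}_*=(W^{(\ell+1)})\transp Y^{(\ell+1)}_*$ into $Y^{(\ell)}_*=X^{(\ell)}_*\had\sigma'(Y^{(\ell)})$ shows that the sequence $(Y^{(\ell)}_*)$ satisfies exactly the delta recursion of Step 1, with the same terminal value at $\ell=L$. A second downward induction, or simply the equality of both with $\partial\loss/\partial Y^{(\ell)}$, then gives $\delta^{(\ell)}=Y^{(\ell)}_*$ for all $\ell$.

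The only real obstacle is conventional rather than mathematical. Some presentations define $\delta$ with the opposite sign, as $-\partial\loss/\partial Y$, or as a row vector. I will fix the denominator-layout, column-vector convention of \Cref{subsec:deriv} and note that any other convention changes the identity only by a sign or a transpose.
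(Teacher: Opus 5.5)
Your proposal is correct and follows the paper's implicit argument. The corollary is read off from \Cref{thm:backprop-fadjoint}, whose proof identifies the F-adjoint recursion and seed with those of \Cref{thm:twostep}, so that $Y^{(\ell)}_*=\partial\loss/\partial Y^{(\ell)}$, which is the delta error under the convention you fix (your Steps~2 and~3 are two ways of closing the same argument, and either one suffices). Keep the sign caveat, but attribute it correctly: in Rumelhart, Hinton and Williams' original presentation of the generalised delta rule, $\delta$ carries a minus sign, $\delta=-\partial E/\partial(\mathrm{net})$, so $\delta^{(\ell)}=Y^{(\ell)}_*$ holds only under the positive-sign convention you adopt.
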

\subsection{Global operator formulation}\label{subsec:global}
We now lift the layerwise F-adjoint recursion to a single global linear
equation. The global system below covers layers $\ell=1,\ldots,L$; the
input-layer adjoint $X_*^{(0)} = (W^{(1)})^T Y_*^{(1)}$ requires a separate
multiplication and is not part of the global linear system.
\begin{definition}[Stacked adjoint vectors]\label{def:stacked}
Stack the layerwise adjoints into global vectors
\[
\Xs = (X_{*}^{(1)}, X_{*}^{(2)}, \dots, X_{*}^{(L)})\transp, \qquad
\Ys = (Y_{*}^{(1)}, Y_{*}^{(2)}, \dots, Y_{*}^{(L)})\transp,
\]
and similarly $\X=(X^{(1)},\ldots,X^{(L)})\transp$,
$\Y=(Y^{(1)},\ldots,Y^{(L)})\transp$ for the forward states. Here the
notation $(\cdot,\ldots,\cdot)\transp$ denotes the block-column vector
obtained by vertically stacking the layerwise vectors $X_*^{(\ell)}\in
\R^{N_\ell}$ (not a transposed row of scalars); e.g.\ $\Xs\in\R^N$ with
$\Xs\big|_{\text{block }\ell}=X_*^{(\ell)}$. Let
$N=\sum_{\ell=1}^L N_\ell$ denote the total number of neurons (excluding
the input layer).
\end{definition}
\begin{definition}[Block-diagonal activation Jacobian]\label{def:sigmap}
For each layer $\ell$, define the diagonal matrix of activation derivatives
\begin{equation}\label{eq:Dell-def}
\Dell_\ell := \diag\bigl(\sigma'(Y^{(\ell)})\bigr) \in \R^{N_\ell \times N_\ell}.
\end{equation}
The global activation Jacobian is the block-diagonal matrix
\begin{equation}\label{eq:sigma_prime}
\Sigmap = \diag(\Dell_1, \Dell_2, \dots, \Dell_L).
\end{equation}
With this notation the pointwise chain rule~\eqref{eq:back1} becomes the global relation
\begin{equation}\label{eq:global_activation_adjoint}
\Ys = \Sigmap \Xs.
\end{equation}
\end{definition}
\begin{definition}[Global weight matrix and backward operator]\label{def:globalW}
The global weight matrix $W\in\R^{N\times N}$ is block strictly
lower-triangular, with $W^{(\ell)}$ positioned at block
$(\ell,\ell-1)$ for $\ell=2,\ldots,L$ and zeros elsewhere:
\[
W = \begin{pmatrix}
0 & 0 & \cdots & 0 & 0 \\
W^{(2)} & 0 & \cdots & 0 & 0 \\
0 & W^{(3)} & \cdots & 0 & 0 \\
\vdots & \vdots & \ddots & \vdots & \vdots \\
0 & 0 & \cdots & W^{(L)} & 0
\end{pmatrix}.
\]
Note that $W^{(1)}$ is explicitly excluded from this construction,
reflecting the fact that the F-adjoint system computes $X_*^{(1)}$
from $Y_*^{(1)}$ but does not need $W^{(1)}$ to do so; $X_*^{(0)}$
requires a separate multiplication by $(W^{(1)})^T$.
The \emph{global backward propagation operator} is
\begin{equation}\label{eq:B_operator}
\cB = W\transp \Sigmap.
\end{equation}
\end{definition}
\begin{proposition}[Structure of $\cB$]
\label{prop:structureB}
The operator $\cB$ is block strictly upper-triangular, with nonzero
blocks only on the first super-diagonal:
\[
\cB_{\ell,\ell+1} = (W^{(\ell+1)})\transp \Dell_{\ell+1},
\qquad \ell = 1, \dots, L-1.
\]
All other blocks are zero.
\end{proposition}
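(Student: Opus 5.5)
The plan is a direct block-matrix computation, carried out entirely at the level of blocks indexed by the layers $1,\ldots,L$.

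\textbf{Step 1: transpose $W$.} By \Cref{def:globalW}, $W$ has a single nonzero block per block-row, namely $W_{\ell,\ell-1}=W^{(\ell)}$ for $\ell=2,\ldots,L$. Since $(W\transp)_{i,j}=(W_{j,i})\transp$ blockwise, the only nonzero blocks of $W\transp$ are
\[
(W\transp)_{\ell-1,\ell}=(W^{(\ell)})\transp,
\qquad \ell=2,\ldots,L.
\]
Re-indexing with $\ell\mapsto\ell+1$, these are $(W\transp)_{\ell,\ell+1}=(W^{(\ell+1)})\transp$ for $\ell=1,\ldots,L-1$. Each such block has size $N_\ell\times N_{\ell+1}$, which is consistent with $W^{(\ell+1)}\in\R^{N_{\ell+1}\times N_\ell}$.

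\textbf{Step 2: multiply by $\Sigmap$.} Multiply on the right by the block-diagonal matrix $\Sigmap=\diag(\Dell_1,\ldots,\Dell_L)$ of \Cref{def:sigmap}. For any matrix $M$ partitioned compatibly, right-multiplication by a block-diagonal matrix acts column-block by column-block:
\[
(M\Sigmap)_{i,j}=\sum_k M_{i,k}(\Sigmap)_{k,j}=M_{i,j}\Dell_j .
\]
Applying this with $M=W\transp$ gives $\cB_{i,j}=(W\transp)_{i,j}\Dell_j$. This block is zero whenever $(W\transp)_{i,j}=0$, that is, unless $j=i+1$. For $j=i+1$ it equals $(W^{(i+1)})\transp\Dell_{i+1}$.

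\textbf{Conclusion.} All blocks of $\cB$ with $j\le i$ vanish, as do all blocks with $j\ge i+2$. Hence $\cB$ is block strictly upper-triangular, and its only possibly nonzero blocks sit on the first super-diagonal, with the stated form $\cB_{\ell,\ell+1}=(W^{(\ell+1)})\transp\Dell_{\ell+1}$.

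There is no genuine obstacle here; the only point requiring care is the index bookkeeping. One must respect the offset caused by excluding $W^{(1)}$, so that the super-diagonal starts at block $(1,2)$. One must also check that the block partitions of $W\transp$ (rows $N_\ell$, columns $N_j$) and of $\Sigmap$ are compatible, so that the blockwise product formula is legitimate.
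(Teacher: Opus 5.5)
Your proof is correct and follows the same route as the paper: you locate the blocks of $W\transp$ on the first super-diagonal, then observe that multiplying by the block-diagonal $\Sigmap$ preserves that sparsity pattern. Your formula $\cB_{i,j}=(W\transp)_{i,j}\Dell_j$ is slightly more careful than the paper's wording, which speaks of multiplication ``on the left'' by $\Sigmap$; it is right-multiplication, as you use, that puts $\Dell_{\ell+1}$ rather than $\Dell_\ell$ in the stated block.
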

\begin{proof}
Since $W$ is block strictly lower-triangular, $W\transp$ is block
strictly upper-triangular with $(W^{(\ell)})\transp$ at position
$(\ell-1,\ell)$. Multiplication on the left by the block-diagonal
$\Sigmap$ preserves the block upper-triangular structure. Hence the
only nonzero blocks are at position $(\ell, \ell+1)$, equal to
$(W^{(\ell+1)})^T \Dell_{\ell+1}$.
\end{proof}
\begin{definition}[Injected source vector]\label{def:source}
The source vector $\bG\in\R^N$ injects the terminal gradient only at
the output layer,
\begin{equation}\label{eq:source_vector}
\bG = (0, 0, \dots, 0, \nabla_{X^{(L)}} \loss)\transp,
\end{equation}
where $\nabla_{X^{(L)}} \loss := \partial\loss/\partial X^{(L)} =
X_{*}^{(L)} \in \R^{N_L}$ is exactly the seed of
\eqref{eq:init-backward}; we use the two notations
$\nabla_{X^{(L)}}\loss$ and $X_*^{(L)}$ interchangeably throughout.
\end{definition}
\begin{theorem}[Nilpotency of $\cB$]
\label{thm:nilpotency}
The global backward propagation operator $\cB$ is nilpotent of
index at most $L$: $\cB^{L}=0$. Moreover, $\cB^{L-1} \neq 0$
precisely when the product
\[
\prod_{j=1}^{L-1} (W^{(j+1)})^T \Dell_{j+1}
\]
is nonzero. A sufficient (but not necessary) condition for this is
that every factor $(W^{(j+1)})^T \Dell_{j+1}$ is an invertible
square matrix (which requires $N_1=\cdots=N_L$ and every
$W^{(j+1)}$ invertible with every diagonal entry of
$\Dell_{j+1}$ nonzero): a product of invertible matrices is
invertible, hence nonzero. This invertibility condition is generic
(it fails only on a measure-zero set of weight matrices, for fixed
nonvanishing activation derivatives). Note that entrywise
nonvanishing of the individual factors is \emph{not} by itself
sufficient: two nonzero matrices can multiply to zero, e.g.\
$\begin{psmallmatrix}1&0\\0&0\end{psmallmatrix}
\begin{psmallmatrix}0&0\\0&1\end{psmallmatrix}=0$, so no purely
entrywise criterion can characterise nonvanishing of the product in
general.
\end{theorem}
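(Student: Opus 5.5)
The plan is to compute the block powers of $\cB$ explicitly, using \Cref{prop:structureB}. Write $\cB_{\ell,\ell+1}=B_\ell:=(W^{(\ell+1)})\transp\Dell_{\ell+1}$ for $\ell=1,\ldots,L-1$. The nonzero blocks of $\cB$ sit only on the first block super-diagonal. I will prove by induction on $k\ge 1$ that $\cB^k$ has nonzero blocks only on the $k$-th block super-diagonal, given by
\[
(\cB^k)_{\ell,\ell+k} = B_\ell B_{\ell+1}\cdots B_{\ell+k-1},
\qquad \ell=1,\ldots,L-k,
\]
with all other blocks zero.

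The inductive step is a block multiplication. We have $(\cB^{k+1})_{i,j}=\sum_m (\cB^k)_{i,m}\cB_{m,j}$. A summand can be nonzero only if $m=i+k$ and $j=m+1$. This forces $j=i+k+1$, and the surviving term is $(B_i\cdots B_{i+k-1})B_{i+k}$.

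Since the block indices range over $\{1,\ldots,L\}$, the $k$-th super-diagonal is empty once $k\ge L$, because it would need $\ell+k\le L$ with $\ell\ge1$. Hence $\cB^L=0$, which is the nilpotency of index at most $L$.

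For $k=L-1$, the only possibly nonzero block is $(\cB^{L-1})_{1,L}=B_1B_2\cdots B_{L-1}=\prod_{j=1}^{L-1}(W^{(j+1)})\transp\Dell_{j+1}$, ordered left to right. Therefore $\cB^{L-1}\neq0$ if and only if this product is nonzero, which gives the claimed equivalence.

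The sufficient condition follows directly. If all $N_\ell$ are equal, every $W^{(j+1)}$ is invertible, and every $\Dell_{j+1}$ has nonzero diagonal, then each $B_j$ is a product of invertible square matrices. The full product is then invertible, hence nonzero.

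For genericity, fix the $\Dell_{j+1}$ with nonvanishing diagonals. The failure set is $\bigcup_j\{\det W^{(j+1)}=0\}$. This is a finite union of zero sets of nonzero polynomials in the weight entries, and each such set has Lebesgue measure zero.

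The $2\times2$ counterexample in the statement is a direct computation, and I will simply verify it. It shows that no entrywise criterion on the individual factors can decide whether the product vanishes.

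The main obstacle is keeping the block indices and the order of the product straight in the induction, since $\cB=W\transp\Sigmap$ and the blocks do not commute. I will guard against errors by checking the $k=2$ case by hand first. A subtler point is the genericity claim: it treats the $\Dell_{j+1}$ as fixed even though they depend on the weights through the $Y^{(\ell)}$. I will state it exactly as the theorem does, namely for fixed nonvanishing activation derivatives.
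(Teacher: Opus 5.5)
Your proposal is correct and follows essentially the same route as the paper. Both arguments use induction on $k$ to show that $\cB^k$ is supported on the $k$-th block super-diagonal. That diagonal is empty at $k=L$, and at $k=L-1$ only the $(1,L)$ block $B_1B_2\cdots B_{L-1}$ remains. The differences are cosmetic: you carry the explicit block formula through the induction (multiplying $\cB^k\cB$ on the right) instead of proving only the support condition and unwinding afterwards, and you add a polynomial zero-set argument for genericity that the paper leaves unproved; the one claim neither of you argues, non-necessity, is witnessed by the nonzero product of non-square factors in \Cref{ex:vector1}.
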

\begin{proof}
The proof proceeds by analyzing the block structure of $\cB$ and establishing the index constraints for its powers via induction. We explicitly address the algebraic properties of matrix multiplication to ensure rigorous handling of zero divisors and summation cancellations.

\textbf{Step 1: Block structure of $\cB$.} 
By \Cref{prop:structureB}, the operator $\cB = W\transp \Sigmap$ is block strictly upper-triangular. Specifically, its only nonzero blocks lie on the first block super-diagonal:
\begin{equation}
    \cB_{i,m} = 
    \begin{cases} 
    (W^{(m)})\transp \Dell_{m} & \text{if } m = i + 1, \\
    0 & \text{otherwise},
    \end{cases}
\end{equation}
for block indices $i, m \in \{1, \dots, L\}$.

\textbf{Step 2: The single-term collapse of block multiplication.}
Consider the block multiplication formula for the $(i,j)$-th block of $\cB^{k+1}$:
\begin{equation}
    (\cB^{k+1})_{i,j} = \sum_{m=1}^L \cB_{i,m} (\cB^{k})_{m,j}.
\end{equation}
Because $\cB$ is strictly block-bidiagonal, $\cB_{i,m}$ is the exact zero matrix for all $m \neq i+1$. Substituting this into the summation yields:
\begin{equation}
    (\cB^{k+1})_{i,j} = 
    \underbrace{0 \cdot (\cB^{k})_{1,j}}_{m=1} + \dots + 
    \underbrace{\cB_{i,i+1} (\cB^{k})_{i+1,j}}_{m=i+1} + \dots + 
    \underbrace{0 \cdot (\cB^{k})_{L,j}}_{m=L}.
\end{equation}
Since multiplying the zero matrix by any matrix yields the zero matrix, every term in the sum where $m \neq i+1$ vanishes identically. The sum collapses to an \textit{exact equality} with a single surviving term:
\begin{equation}
    (\cB^{k+1})_{i,j} = \cB_{i,i+1} (\cB^{k})_{i+1,j}.
\end{equation}
\textit{Crucial Consequence:} Because the summation contains exactly one nonzero addend, there are no competing terms to add or subtract. Therefore, algebraic cancellation (e.g., $A + (-A) = 0$) is structurally impossible. The block $(\cB^{k+1})_{i,j}$ is nonzero \textit{if and only if} the single product $\cB_{i,i+1} (\cB^{k})_{i+1,j}$ is nonzero.

\textbf{Step 3: Inductive hypothesis and the handling of zero divisors.}
We claim that for any integer $k \ge 1$, a necessary condition for $(\cB^k)_{i,j} \neq 0$ is $j = i + k$. 
\textit{Base case ($k=1$):} This holds trivially by Step 1.
\textit{Inductive step:} Assume the claim holds for some $k \ge 1$. We must show it holds for $k+1$. 
Assume $(\cB^{k+1})_{i,j} \neq 0$. By the exact equality established in Step 2, this implies:
\begin{equation}
    \cB_{i,i+1} (\cB^{k})_{i+1,j} \neq 0.
\end{equation}
In the ring of matrices, if a product $AB \neq 0$, it is a universal necessity that both $A \neq 0$ and $B \neq 0$ (if either were the zero matrix, the product would be zero). Therefore, we must have:
\begin{equation}
    \cB_{i,i+1} \neq 0 \quad \text{AND} \quad (\cB^{k})_{i+1,j} \neq 0.
\end{equation}
\textit{Note on Zero Divisors:} We are \textit{not} using the false converse that nonzero factors guarantee a nonzero product. We are strictly using the valid implication $AB \neq 0 \implies A \neq 0 \land B \neq 0$. 
The first conjunct, $\cB_{i,i+1}\neq0$, is exactly the condition just derived from the hypothesis $(\cB^{k+1})_{i,j}\neq0$ and requires no further justification. The second conjunct, together with the inductive hypothesis, means $(\cB^{k})_{i+1,j} \neq 0$ requires $j = (i+1) + k = i + k + 1$. Thus, the claim holds for $k+1$.

\textbf{Step 4: Proof of $\cB^L = 0$.}
For $\cB^L$, the necessary condition for a nonzero block is $j = i + L$. However, the block indices are constrained to the finite set $1 \le i$ and $j \le L$. This implies $i + L \le L \implies i \le 0$, which contradicts $i \ge 1$. Therefore, no valid pair of indices $(i,j)$ exists to satisfy the condition. Consequently, every block of $\cB^L$ is identically zero, proving $\cB^L = 0$.

\textbf{Step 5: Condition for $\cB^{L-1} \neq 0$.}
For $k = L - 1$, the condition $j = i + L - 1$ subject to $1 \le i$ and $j \le L$ has the unique integer solution $i = 1$ and $j = L$. Thus, the top-right block $(1, L)$ is the \textit{only} block in $\cB^{L-1}$ that can possibly be nonzero. 
Unwinding the single-term recursion from Step 2 for this specific block yields:
\begin{align}
    (\cB^{L-1})_{1,L} &= \cB_{1,2} (\cB^{L-2})_{2,L} \nonumber \\
    &= \cB_{1,2} \cB_{2,3} (\cB^{L-3})_{3,L} \nonumber \\
    &= \dots \nonumber \\
    &= \cB_{1,2} \cB_{2,3} \cdots \cB_{L-1,L} \nonumber \\
    &= \prod_{j=1}^{L-1} (W^{(j+1)})\transp \Dell_{j+1}.
\end{align}
Because all other blocks of $\cB^{L-1}$ are identically zero, the entire matrix $\cB^{L-1}$ is nonzero \textit{if and only if} this specific matrix product is nonzero.
\end{proof}
\begin{corollary}[Invertibility of $I-\cB$]\label{cor:invertible}
Since $\cB$ is nilpotent, $I-\cB$ is invertible with inverse given
by the finite Neumann series
\begin{equation}\label{eq:neumann_inverse}
(I - \cB)^{-1} = \sum_{k=0}^{L-1} \cB^{k}.
\end{equation}
\end{corollary}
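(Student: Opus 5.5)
The plan is to verify directly that the finite sum $S:=\sum_{k=0}^{L-1}\cB^{k}$ is a two-sided inverse of $I-\cB$. The only ingredient needed is $\cB^{L}=0$ from \Cref{thm:nilpotency}. Since $I$ commutes with every power of $\cB$, and powers of $\cB$ commute with each other, it suffices to compute one product and observe that the other is identical.

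\textbf{Step 1: the telescoping product.} Expand
\[
(I-\cB)\,S=\sum_{k=0}^{L-1}\cB^{k}-\sum_{k=0}^{L-1}\cB^{k+1}=\sum_{k=0}^{L-1}\cB^{k}-\sum_{k=1}^{L}\cB^{k}=I-\cB^{L}.
\]
The two sums cancel term by term except for the first term of the first sum and the last term of the second. Applying \Cref{thm:nilpotency} gives $\cB^{L}=0$, so $(I-\cB)S=I$.

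\textbf{Step 2: the other side.} The same telescoping computation yields $S(I-\cB)=I-\cB^{L}=I$. Alternatively, since $I-\cB$ is a square $N\times N$ matrix, a one-sided inverse is automatically two-sided. Either way, $I-\cB$ is invertible and $(I-\cB)^{-1}=S$. This is exactly~\eqref{eq:neumann_inverse}.

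\textbf{Remarks.} As a cross-check, invertibility also follows from the spectrum. A nilpotent $\cB$ has only the eigenvalue $0$, so $I-\cB$ has only the eigenvalue $1$ and hence $\det(I-\cB)=1\neq0$. More concretely, $I-\cB$ is block upper-triangular with identity diagonal blocks by \Cref{prop:structureB}, which gives the same determinant directly. There is no genuine obstacle in this argument. The only point to state explicitly is that the series truncates at $k=L-1$ because $\cB^{k}=0$ for all $k\ge L$, which follows from $\cB^{L}=0$. No convergence question therefore arises, unlike the general Neumann series, which requires a norm bound $\norm{\cB}<1$.
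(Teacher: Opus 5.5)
Your telescoping argument $(I-\cB)\sum_{k=0}^{L-1}\cB^{k}=I-\cB^{L}=I$, which uses only $\cB^{L}=0$ from \Cref{thm:nilpotency}, is correct and is essentially the argument the paper relies on: it states the corollary as an immediate consequence of nilpotency and gives no separate proof. One minor aside, which does not affect your proof: the general Neumann series converges exactly when the spectral radius of $\cB$ is below $1$, so $\norm{\cB}<1$ is only a sufficient condition, not a requirement.
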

\begin{theorem}[Global fixed-point formulation]
\label{thm:global_fixed_point}
The F-adjoint variables satisfy the global fixed-point equation
\begin{equation}\label{eq:global_fixed_point}
(I - \cB) \Xs = \bG.
\end{equation}
Conversely, if $\Xs$ satisfies \eqref{eq:global_fixed_point} with
$\bG$ defined by~\eqref{eq:source_vector}, then the layerwise
F-adjoint recursions~\eqref{eq:fadjoint-rec} hold.
\end{theorem}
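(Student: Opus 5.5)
The plan is to compute $(I-\cB)\Xs$ block by block using the explicit structure of $\cB$ from \Cref{prop:structureB}. Block $\ell$ of $\cB\Xs$ is
\[
(\cB\Xs)_\ell=\sum_{m}\cB_{\ell,m}X_*^{(m)}.
\]
For $\ell\le L-1$ this sum collapses to the single term $(W^{(\ell+1)})\transp\Dell_{\ell+1}X_*^{(\ell+1)}$, and for $\ell=L$ it is zero. So the block form of \eqref{eq:global_fixed_point} reads
\[
X_*^{(\ell)}-(W^{(\ell+1)})\transp\Dell_{\ell+1}X_*^{(\ell+1)}=0\quad(1\le\ell\le L-1),\qquad X_*^{(L)}=\nabla_{X^{(L)}}\loss .
\]
Both directions reduce to checking that this system is equivalent to \eqref{eq:fadjoint-rec} restricted to $\ell=2,\dots,L$, with seed \eqref{eq:init-backward}.

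\textbf{Forward direction.} Take the F-adjoint variables from \Cref{def:fadjoint}. Note that $\Dell_{\ell+1}X_*^{(\ell+1)}=X_*^{(\ell+1)}\had\sigma'(Y^{(\ell+1)})=Y_*^{(\ell+1)}$, since multiplying by a diagonal matrix is the same as a Hadamard product with its diagonal. This is the blockwise content of \eqref{eq:global_activation_adjoint}. Then $(W^{(\ell+1)})\transp Y_*^{(\ell+1)}=X_*^{(\ell)}$ by the second relation in \eqref{eq:fadjoint-rec} with index $\ell+1$. Hence every block for $\ell\le L-1$ vanishes. The last block equals the seed, which matches $\bG$ by \Cref{def:source}.

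\textbf{Converse.} Suppose $\Xs$ solves \eqref{eq:global_fixed_point}. The last block gives $X_*^{(L)}=\nabla_{X^{(L)}}\loss$, which is the seed. For each $\ell=L,\dots,1$ define $Y_*^{(\ell)}:=X_*^{(\ell)}\had\sigma'(Y^{(\ell)})$. This is a definition, so the first relation in \eqref{eq:fadjoint-rec} holds by construction. The remaining blocks then say $X_*^{(\ell-1)}=(W^{(\ell+1-1)})\transp Y_*^{(\ell)}$ for $\ell=2,\dots,L$, which is the second relation. Finally, set $X_*^{(0)}:=(W^{(1)})\transp Y_*^{(1)}$, following the remark before \Cref{def:stacked} that this step lies outside the global system.

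Uniqueness gives a cleaner way to phrase the converse. By \Cref{cor:invertible}, $I-\cB$ is invertible, so the solution of \eqref{eq:global_fixed_point} is unique. The F-adjoint built from the same seed is a solution by the forward direction, so any solution must coincide with it.

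The main obstacle is bookkeeping rather than analysis. One has to handle the index range carefully: the global system covers only $\ell=1,\dots,L$, the $\ell=1$ step involving $W^{(1)}$ has to be supplied externally, and the identification $\Dell_\ell v=v\had\sigma'(Y^{(\ell)})$ must be stated explicitly.
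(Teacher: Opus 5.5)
Your proposal is correct and follows essentially the same route as the paper: you unpack $(I-\cB)\Xs=\bG$ block by block using the super-diagonal structure of $\cB$, and for the converse you read off the seed from block $L$ and then define $Y_*^{(\ell)}:=\Dell_\ell X_*^{(\ell)}$. The differences are cosmetic. The paper's forward direction assembles $\Xs=W\transp\Ys+\bG$ from the global identity $\Ys=\Sigmap\Xs$ rather than checking each block of $\cB\Xs$ directly. Your uniqueness shortcut via \Cref{cor:invertible} is a valid, non-circular alternative for the converse.
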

\begin{proof}
The proof proceeds in two parts: first, we show that the layerwise F-adjoint recursions imply the global fixed-point equation; second, we show the converse by unpacking the global equation block-by-block.

\textbf{Step 1: Define the stacked vectors and global matrices.}
By \Cref{def:stacked}, the adjoint variables are stacked into global vectors:
\begin{equation}
    \Xs = (X_{*}^{(1)}, X_{*}^{(2)}, \dots, X_{*}^{(L)})\transp, \qquad
    \Ys = (Y_{*}^{(1)}, Y_{*}^{(2)}, \dots, Y_{*}^{(L)})\transp.
\end{equation}
By \Cref{def:sigmap}, the global activation Jacobian is the block-diagonal matrix $\Sigmap = \diag(\Dell_1, \dots, \Dell_L)$, where $\Dell_\ell = \diag(\sigma'(Y^{(\ell)}))$. 
By \Cref{def:globalW}, the global weight matrix $W$ is block strictly lower-triangular with $W^{(\ell)}$ at block $(\ell, \ell-1)$. Consequently, $W\transp$ is block strictly upper-triangular with $(W^{(\ell)})\transp$ at block $(\ell-1, \ell)$. 
The global backward operator is $\cB = W\transp \Sigmap$, and by \Cref{def:source}, the source vector is $\bG = (0, \dots, 0, \nabla_{X^{(L)}} \loss)\transp$.

\textbf{Step 2: Translate the activation derivative step into global form.}
The layerwise F-adjoint recursion for the pre-activation gradient is $Y_{*}^{(\ell)} = X_{*}^{(\ell)} \had \sigma'(Y^{(\ell)})$ for $\ell=1,\dots,L$. 
Since $\Dell_\ell = \diag(\sigma'(Y^{(\ell)}))$, this element-wise multiplication is equivalent to the matrix-vector product $Y_{*}^{(\ell)} = \Dell_\ell X_{*}^{(\ell)}$. 
Stacking these equations for all $\ell=1,\dots,L$ yields the global relation:
\begin{equation}
    \Ys = \Sigmap \Xs.
\end{equation}

\textbf{Step 3: Translate the weight transpose step into global form.}
The layerwise recursion for the activation gradient is $X_{*}^{(\ell-1)} = (W^{(\ell)})\transp Y_{*}^{(\ell)}$ for $\ell=2,\dots,L$. 
Consider the block matrix-vector product $W\transp \Ys$. The $i$-th block of this product is $\sum_{j=1}^L (W\transp)_{i,j} Y_{*}^{(j)}$. 
Because $W\transp$ has nonzero blocks only at $(i, i+1)$, the sum collapses to a single term: $(W^{(i+1)})\transp Y_{*}^{(i+1)}$ for $i=1,\dots,L-1$, and $0$ for $i=L$. 
Thus, the vector $W\transp \Ys$ is exactly $(X_{*}^{(1)}, X_{*}^{(2)}, \dots, X_{*}^{(L-1)}, 0)\transp$.

\textbf{Step 4: Combine to derive the forward direction.}
We can express the full stacked vector $\Xs$ by adding the missing terminal component $X_{*}^{(L)}$ to the result of Step 3:
\begin{equation}
    \Xs = W\transp \Ys + (0, \dots, 0, X_{*}^{(L)})\transp.
\end{equation}
By the terminal initialisation of the backward pass, $X_{*}^{(L)} = \nabla_{X^{(L)}} \loss$, so the second term is exactly the source vector $\bG$. Substituting $\Ys = \Sigmap \Xs$ from Step 2, and recognising the operator $\cB = W\transp \Sigmap$, we obtain:
\begin{equation}
    \Xs = W\transp \Sigmap \Xs + \bG \implies \Xs = \cB \Xs + \bG.
\end{equation}
Rearranging terms yields the global fixed-point equation:
\begin{equation}
    (I - \cB) \Xs = \bG.
\end{equation}

\textbf{Step 5: Unpack the global equation block-by-block (Converse).}
Conversely, assume $\Xs$ satisfies $(I - \cB)\Xs = \bG$. 
The $i$-th block of this equation is $X_{*}^{(i)} - (\cB \Xs)_i = \bG_i$. 
By \Cref{prop:structureB}, $\cB$ has nonzero blocks only on the first super-diagonal, meaning $(\cB \Xs)_i = \cB_{i,i+1} X_{*}^{(i+1)} = (W^{(i+1)})\transp \Dell_{i+1} X_{*}^{(i+1)}$ for $i < L$, and $0$ for $i = L$.

\textbf{Step 6: Recover the terminal condition.}
For the final block $i = L$, the equation is $X_{*}^{(L)} - 0 = \bG_L$. 
By the definition of the source vector, $\bG_L = \nabla_{X^{(L)}} \loss$. 
Thus, $X_{*}^{(L)} = \nabla_{X^{(L)}} \loss$, which perfectly recovers the terminal initialisation of the backward pass.

\textbf{Step 7: Recover the layerwise recursions.}
For any block $i \in \{1, \dots, L-1\}$, the equation is:
\begin{equation}
    X_{*}^{(i)} - (W^{(i+1)})\transp \Dell_{i+1} X_{*}^{(i+1)} = 0 \implies X_{*}^{(i)} = (W^{(i+1)})\transp \left( \Dell_{i+1} X_{*}^{(i+1)} \right).
\end{equation}
Let $\ell = i+1$. As $i$ ranges from $1$ to $L-1$, $\ell$ ranges from $2$ to $L$. Substituting $\ell$ yields:
\begin{equation}
    X_{*}^{(\ell-1)} = (W^{(\ell)})\transp \left( \Dell_\ell X_{*}^{(\ell)} \right).
\end{equation}
Defining $Y_{*}^{(\ell)} := \Dell_\ell X_{*}^{(\ell)}$ (which is exactly $X_{*}^{(\ell)} \had \sigma'(Y^{(\ell)})$), we recover:
\begin{equation}
    Y_{*}^{(\ell)} = X_{*}^{(\ell)} \had \sigma'(Y^{(\ell)}), \qquad X_{*}^{(\ell-1)} = (W^{(\ell)})\transp Y_{*}^{(\ell)},
\end{equation}
for all $\ell = 2, \dots, L$. Together with the terminal condition from Step 6 and the definition of $Y_*^{(L)}$, this constitutes the complete set of layerwise F-adjoint recursions.
\end{proof}
\begin{corollary}[Neumann series expansion]\label{cor:neumann}
The solution to the global fixed-point equation is given by the
finite Neumann series
\begin{equation}\label{eq:neumann_series}
\Xs = \sum_{k=0}^{L-1} \cB^{k} \bG.
\end{equation}
\end{corollary}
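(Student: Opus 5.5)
The plan is to combine the global fixed-point equation of \Cref{thm:global_fixed_point} with the explicit inverse from \Cref{cor:invertible}. By \Cref{thm:global_fixed_point}, the stacked F-adjoint vector $\Xs$ satisfies $(I-\cB)\Xs=\bG$. By \Cref{cor:invertible}, $I-\cB$ is invertible with $(I-\cB)^{-1}=\sum_{k=0}^{L-1}\cB^k$. Multiplying the fixed-point equation on the left by this inverse gives $\Xs=\sum_{k=0}^{L-1}\cB^k\bG$ directly. Invertibility also gives uniqueness: the finite sum is \emph{the} solution, and by the converse part of \Cref{thm:global_fixed_point} it therefore reproduces the layerwise recursion~\eqref{eq:fadjoint-rec}.

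To keep the argument self-contained, I would also verify the inverse formula explicitly rather than rely only on the corollary. Expanding the telescoping product gives
\[
(I-\cB)\sum_{k=0}^{L-1}\cB^k=\sum_{k=0}^{L-1}\cB^k-\sum_{k=1}^{L}\cB^k=I-\cB^L,
\]
and this equals $I$ because $\cB^L=0$ by \Cref{thm:nilpotency}. The same computation works with the factors multiplied in the other order, so the sum is a two-sided inverse.

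Finally, as a sanity check that also interprets the terms, I would compute $\cB^k\bG$ blockwise using the single-term collapse of Step~2 in the proof of \Cref{thm:nilpotency}. The only nonzero block of $\bG$ is block $L$, and $(\cB^k)_{i,L}\neq0$ requires $i=L-k$. Hence $\cB^k\bG$ is supported on block $L-k$, where it equals $\cB_{L-k,L-k+1}\cdots\cB_{L-1,L}\,X_*^{(L)}$. Unwinding~\eqref{eq:fadjoint-rec}, this product is exactly $X_*^{(L-k)}$. So the $k$-th Neumann term fills in exactly one layer, and the terms $k=0,\dots,L-1$ together recover every block of $\Xs$.

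There is no real obstacle here. The only point needing care is the index bookkeeping in this last check: the support of $\cB^k\bG$ must land on block $L-k$, and the terms with $k\ge L$ must vanish, which is precisely the nilpotency bound.
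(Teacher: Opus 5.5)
Your argument is correct and is essentially how the paper obtains this corollary: the paper gives no separate proof because the result follows immediately from applying the finite Neumann inverse of \Cref{cor:invertible} to the fixed-point equation of \Cref{thm:global_fixed_point}. Your telescoping verification of the inverse and your blockwise support computation are sound extras, and the latter is precisely the content the paper establishes separately in \Cref{prop:path}.
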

\begin{corollary}[Equivalence of characterisations]\label{cor:equivalence}
Let $\cB=W\transp\Sigmap$ with $\cB^L=0$. For a candidate vector
$\Xs=(X_{*}^{(1)},\ldots,X_{*}^{(L)})\transp$ the following are
equivalent:
\begin{enumerate}[label=\textbf{(\Alph*)}]
\item \textbf{Layerwise.} The components satisfy the terminal
condition $X_{*}^{(L)}= \nabla_{X^{(L)}}\loss$ and the backward
recursions~\eqref{eq:back1}--\eqref{eq:back2};
\item \textbf{Global fixed point.} $(I-\cB)\Xs=\bG$;
\item \textbf{Neumann series.} $\Xs=\sum_{k=0}^{L-1}\cB^{k}\bG$.
\end{enumerate}
Moreover, since $I-\cB$ is invertible (\Cref{cor:invertible}), the
vector $\Xs$ satisfying (A)--(C) is unique, and coincides with the
F-adjoint and with backpropagation.
\end{corollary}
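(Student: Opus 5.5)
The plan is to prove the cycle (A)$\Leftrightarrow$(B)$\Leftrightarrow$(C) and then get uniqueness from invertibility.

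For (A)$\Leftrightarrow$(B) I will invoke \Cref{thm:global_fixed_point} directly. Its forward direction shows that the terminal condition together with the recursions \eqref{eq:back1}--\eqref{eq:back2} gives $\Xs=\cB\Xs+\bG$. Its converse recovers exactly the terminal condition and the recursions for $\ell=2,\ldots,L$. One small point needs checking here. Condition (A) also involves $Y_*^{(1)}$ and $X_*^{(0)}$, which are not components of $\Xs$. Under (A) as stated for the candidate $\Xs$, the recursion \eqref{eq:back2} with $\ell=1$ only defines $X_*^{(0)}$ and imposes no constraint on $\Xs$. Likewise \eqref{eq:back1} with $\ell=1$ merely defines $Y_*^{(1)}=\Dell_1X_*^{(1)}$. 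So the content of (A) on $\Xs$ is precisely what the converse of \Cref{thm:global_fixed_point} recovers, and I will say this explicitly.

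For (B)$\Leftrightarrow$(C) I will use \Cref{cor:invertible}. Because $\cB^L=0$, multiplying out gives the telescoping identity
\[
(I-\cB)\sum_{k=0}^{L-1}\cB^k = I-\cB^L = I,
\]
and the same holds with the factors in the other order. If (B) holds, applying $\sum_{k=0}^{L-1}\cB^k$ to both sides gives (C). If (C) holds, applying $I-\cB$ to both sides gives (B). This is the content of \Cref{cor:neumann}, stated as a two-way equivalence.

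Uniqueness follows because $I-\cB$ is invertible. Hence (B) has the single solution $(I-\cB)^{-1}\bG$, and by the equivalences so do (A) and (C). Finally, the F-adjoint of \Cref{def:fadjoint} seeded with $\nabla_{X^{(L)}}\loss$ satisfies (A) by construction. By \Cref{thm:twostep} and \Cref{thm:backprop-fadjoint}, the true gradients $\partial\loss/\partial X^{(\ell)}$ satisfy (A) as well. By uniqueness, all of these coincide with the solution of (B) and (C).

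I do not expect a real obstacle. The only delicate step is the bookkeeping in (A)$\Leftrightarrow$(B): making sure the $\ell=1$ recursions are treated as definitions of auxiliary quantities rather than as constraints, so that the equivalence is exact and does not hold only in one direction.
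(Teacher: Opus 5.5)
Your proposal is correct and follows the same route the paper relies on. The paper gives no separate proof of this corollary: it rests on \Cref{thm:global_fixed_point} for (A)$\Leftrightarrow$(B), on the finite Neumann inverse of \Cref{cor:invertible} and \Cref{cor:neumann} for (B)$\Leftrightarrow$(C), and on invertibility of $I-\cB$ for uniqueness. Your explicit remark that the $\ell=1$ recursions only define the auxiliary quantities $Y_*^{(1)}$ and $X_*^{(0)}$, and so impose no constraint on $\Xs$, tightens bookkeeping that the paper's converse argument glosses over.
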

\begin{figure}[htbp]
\centering
\begin{tikzpicture}[
cell/.style={draw,minimum width=1.4cm,minimum height=1.0cm,font=\small,anchor=center},
zero/.style={cell,fill=gray!10},
diag/.style={cell,fill=blue!8},
super/.style={cell,fill=red!12},
brace/.style={decorate,decoration={brace,amplitude=6pt,mirror}},
]
\matrix (M) [matrix of nodes, nodes in empty cells,row sep=2pt, column sep=2pt] {
|[diag]|$I$          & |[super]|$-\cB_{12}$ & |[zero]|$0$ & |[zero]|$0$ \\
|[zero]|$0$        & |[diag]|$I$            & |[super]|$-\cB_{23}$ & |[zero]|$0$ \\
|[zero]|$0$        & |[zero]|$0$          & |[diag]|$I$   & |[super]|$-\cB_{34}$ \\
|[zero]|$0$        & |[zero]|$0$          & |[zero]|$0$ & |[diag]|$I$ \\
};
\draw[brace] (M-1-1.south west) -- (M-4-1.south west)
node[midway,left=8pt]{\small $I-\cB$};
\node[below=6pt of M,font=\footnotesize\itshape]
{Identity diagonal + strictly upper bidiagonal super-diagonal};
\end{tikzpicture}
\caption{Block structure of $I-\cB$ for $L=4$.}
\label{fig:bidiag}
\end{figure}
\subsection{Block back-substitution and path decomposition}\label{subsec:backsub}
\begin{proposition}[Back-substitution interpretation]
\label{prop:backsub}
Solving $(I-\cB)\Xs=\bG$ by standard block back-substitution on the
upper bidiagonal system reproduces the backward pass in exactly $L$
steps, visiting layers $L,L-1,\ldots,1$ in order.
\end{proposition}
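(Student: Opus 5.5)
I will write out the block rows of $(I-\cB)\Xs=\bG$ explicitly, using \Cref{prop:structureB}. Row $L$ reads $X_*^{(L)}=\bG_L=\nabla_{X^{(L)}}\loss$, because $\cB$ has no nonzero block in its last block row. For $i=1,\ldots,L-1$, row $i$ reads $X_*^{(i)}-(W^{(i+1)})\transp\Dell_{i+1}X_*^{(i+1)}=0$, since $\bG_i=0$ and the only nonzero block in row $i$ is at $(i,i+1)$. The coefficient matrix is block upper bidiagonal with identity diagonal blocks, as in \Cref{fig:bidiag}. Block back-substitution therefore needs no diagonal block inversions: each step is a single solve against $I$, which is trivial.

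I will define the back-substitution procedure as it is standardly defined. First solve the last block row for $X_*^{(L)}$. Then, for $i=L-1,\ldots,1$, solve row $i$ for $X_*^{(i)}$, using the already-computed $X_*^{(i+1)}$; this is the only unknown other than $X_*^{(i)}$ that appears in that row. I will then argue by downward induction on $i$ that the value produced at step $L-i+1$ equals the F-adjoint $X_*^{(i)}$ of \Cref{def:fadjoint}.

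\begin{itemize}
\item \emph{Base case.} The first step yields the seed~\eqref{eq:init-backward}.
\item \emph{Inductive step.} Set $Y_*^{(i+1)}:=\Dell_{i+1}X_*^{(i+1)}=X_*^{(i+1)}\had\sigma'(Y^{(i+1)})$. The update $X_*^{(i)}=(W^{(i+1)})\transp\Dell_{i+1}X_*^{(i+1)}$ then factors exactly as the two-step rule~\eqref{eq:back1}--\eqref{eq:back2} at layer $\ell=i+1$.
\end{itemize}

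Counting steps gives one step for row $L$ plus $L-1$ steps for rows $L-1,\ldots,1$, so exactly $L$ steps in total. They visit layers $L,L-1,\ldots,1$ in that order, matching the classical backward pass. The remaining quantities $Y_*^{(1)}$ and $X_*^{(0)}$ are obtained by the extra multiplications noted before \Cref{def:stacked}. The weight gradients then follow from~\eqref{eq:weight-grad} via \Cref{thm:backprop-fadjoint}.

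Finally, I will invoke \Cref{cor:invertible} and \Cref{cor:equivalence}: since $I-\cB$ is invertible, the back-substitution output is the unique solution. It therefore coincides with the Neumann-series and layerwise characterisations.

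The main obstacle is not computational. It is making ``reproduces the backward pass'' precise. I will handle this by stating the claim as step-by-step equality of the iterates with the layerwise recursion, not merely equality of final outputs, and the induction above gives exactly that. A secondary point is that the sweep must run in order $L,\ldots,1$: each row depends only on the next one because the matrix is upper bidiagonal, and the base of the sweep is well defined because $\bG$ has its only nonzero block in block $L$.
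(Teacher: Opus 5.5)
Your proposal is correct and follows essentially the same route as the paper's proof. Both read off $X_*^{(L)}=\nabla_{X^{(L)}}\loss$ from the last block row of the identity-diagonal upper bidiagonal system, then obtain $X_*^{(\ell)}=\cB_{\ell,\ell+1}X_*^{(\ell+1)}$ for $\ell=L-1,\ldots,1$ and identify this with the layerwise two-step recursion, while your explicit downward induction on the iterates, the step count $1+(L-1)=L$, and the uniqueness remark via \Cref{cor:invertible} spell out what the paper compresses into one line.
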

\begin{proof}
Back-substitution on the upper-triangular system with $U_{\ell\ell}=I$
gives $X_{*}^{(L)} = \nabla_{X^{(L)}}\loss$ and
$X_{*}^{(\ell)} = \cB_{\ell,\ell+1}X_{*}^{(\ell+1)}$ for
$\ell=L-1,\ldots,1$, which is precisely the layerwise backward pass.
\end{proof}
\begin{proposition}[Path expansion]
\label{prop:path}
The $\ell$-th block of $\cB^{k}\bG$ is nonzero only when $k=L-\ell$,
and in that case
\[
\bigl(\cB^{L-\ell}\bG\bigr)_\ell
= \left[\prod_{j=\ell}^{L-1}(W^{(j+1)})\transp\Dell_{j+1}\right]
\nabla_{X^{(L)}}\loss.
\]
In particular, $X_{*}^{(\ell)} = \bigl(\cB^{L-\ell}\bG\bigr)_\ell$.
\end{proposition}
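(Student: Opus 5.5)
The plan is to reuse the single-term collapse established in Step 2 of the proof of \Cref{thm:nilpotency}, this time applied to a matrix--vector product. We then read off the block support of $\cB^k\bG$ by induction on $k$.

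The claim to prove by induction is the following. For every $k\ge 0$, the vector $\cB^k\bG$ has at most one nonzero block, namely block $L-k$ (and no nonzero block if $k\ge L$). That block equals
\[
\left[\prod_{j=L-k}^{L-1}(W^{(j+1)})\transp\Dell_{j+1}\right]\nabla_{X^{(L)}}\loss .
\]
Here the ordered product is written left to right in increasing $j$, and the empty product is $I$.

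The base case $k=0$ is immediate from \Cref{def:source}, since $\bG$ is supported only in block $L$ and equals $\nabla_{X^{(L)}}\loss$ there.

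For the inductive step, write the $i$-th block as $(\cB^{k+1}\bG)_i=\sum_m \cB_{i,m}(\cB^k\bG)_m$. By \Cref{prop:structureB}, only the term $m=i+1$ survives, so
\[
(\cB^{k+1}\bG)_i=\cB_{i,i+1}(\cB^k\bG)_{i+1}
\]
for $i<L$, and the block is $0$ for $i=L$. By the inductive hypothesis, $(\cB^k\bG)_{i+1}$ can be nonzero only if $i+1=L-k$, that is, $i=L-(k+1)$. In that case, multiplying on the left by $\cB_{L-k-1,L-k}=(W^{(L-k)})\transp\Dell_{L-k}$ prepends the factor with $j=L-k-1$ to the ordered product. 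This gives the stated formula with $\ell=L-(k+1)$. If $L-k-1<1$, no block is available, which recovers $\cB^{L}\bG=0$, consistent with \Cref{thm:nilpotency}.

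Setting $\ell=L-k$ then gives the displayed formula of the proposition, and shows that block $\ell$ vanishes for every $k\neq L-\ell$.

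For the final assertion, I would invoke \Cref{cor:neumann}: $\Xs=\sum_{k=0}^{L-1}\cB^k\bG$. Taking block $\ell$ of this sum, the support result just proved kills every term except $k=L-\ell$, which lies in the range $0\le k\le L-1$ because $1\le\ell\le L$. Hence $X_*^{(\ell)}=(\cB^{L-\ell}\bG)_\ell$.

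The only delicate point is the wording ``nonzero only when $k=L-\ell$''. This should be read as a necessary condition: block $\ell$ of $\cB^{L-\ell}\bG$ may itself vanish, for instance when a factor annihilates the gradient, so no converse is claimed. The other point needing care is the ordering of the non-commuting product. I would double-check it against Step 5 of the proof of \Cref{thm:nilpotency}, where the same left-to-right ordering $\cB_{\ell,\ell+1}\cdots\cB_{L-1,L}$ appears.
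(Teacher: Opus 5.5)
Your proof is correct and follows the paper's argument: the paper notes that $\bG$ is supported in block $L$, that each application of the bidiagonal $\cB$ shifts content from block $\ell+1$ to block $\ell$, and that the product formula follows by unwinding the recursion. Your induction on $k$ is a careful formalisation of exactly this, and your appeal to \Cref{cor:neumann} makes explicit the final identification $X_*^{(\ell)}=(\cB^{L-\ell}\bG)_\ell$, which the paper leaves implicit.
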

\begin{proof}
The source $\bG$ has nonzero content only in block $L$. Since $\cB$
is strictly upper bidiagonal, each multiplication by $\cB$ moves
content from block $\ell+1$ to block $\ell$. Hence $(\cB^k\bG)_\ell
\neq 0$ only for $\ell=L-k$. The product formula follows by
unwinding the recursion.
\end{proof}
\begin{remark}[Single-path collapse]\label{rem:collapse}
For strictly feedforward architectures, the Neumann series collapses
to a \emph{single nonzero term per block}: each layer $\ell$
receives its adjoint from exactly the $(L-\ell)$-th power of $\cB$.
The summation structure becomes essential only when $\cB$ is not
strictly bidiagonal, as with residual connections.
\end{remark}
\begin{lemma}[General path-expansion lemma]\label{lem:path-general}
Let $\cB$ be any block strictly upper-triangular nilpotent
operator on blocks $1,\ldots,L$ (not necessarily bidiagonal;
$\cB_{i,j}$ may be nonzero for any $j>i$, as when skip connections
are present), and let $\bG$ have nonzero content only in block $L$.
For $\ell<L$ define a \emph{path} from $L$ to $\ell$ as a strictly
decreasing chain of block indices $L=i_0>i_1>\cdots>i_m=\ell$
(equivalently, written increasing,
$\ell=i_m<i_{m-1}<\cdots<i_0=L$) with $m\ge1$ such that every
factor $\cB_{i_k,i_{k-1}}$ is nonzero, and let its \emph{weight} be
the ordered matrix product
$\cB_{i_m,i_{m-1}}\cB_{i_{m-1},i_{m-2}}\cdots\cB_{i_1,i_0}$. Then
\begin{equation}\label{eq:path-general}
    X_{*}^{(\ell)} = \sum_{m=1}^{L-\ell} \sum_{L > j_1 > \dots > j_{m-1} > \ell} \cB_{\ell, j_{m-1}} \cdots \cB_{j_1, L} \,\bG_L,
\end{equation}
the sum ranging over all paths $P$ from $L$ to $\ell$ in the sense
above (and, for $\ell=L$, $X_*^{(L)}=\bG_L$, the empty-path term).
\end{lemma}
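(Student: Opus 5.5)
Since $\cB$ is block strictly upper-triangular it is nilpotent with $\cB^L=0$. Indeed, the argument of Steps 2--4 in the proof of \Cref{thm:nilpotency} adapts: $(\cB^k)_{i,j}\neq0$ forces $j\ge i+k$. Hence $I-\cB$ is invertible and, as in \Cref{cor:invertible}, $(I-\cB)^{-1}=\sum_{k=0}^{L-1}\cB^k$. I also take the generalised system $(I-\cB)\Xs=\bG$ as the defining relation of the adjoint in this setting, as the converse direction of \Cref{thm:global_fixed_point} does in the bidiagonal case. Then $\Xs=\sum_{k=0}^{L-1}\cB^k\bG$, and it suffices to compute the $\ell$-th block of each $\cB^k\bG$. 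Because $\bG$ is supported in block $L$, we have $(\cB^k\bG)_\ell=(\cB^k)_{\ell,L}\,\bG_L$.

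The key step is the block expansion of powers. By induction on $k$, using $(\cB^{k+1})_{i,j}=\sum_m\cB_{i,m}(\cB^k)_{m,j}$ and the vanishing of $\cB_{i,m}$ for $m\le i$, I will show
\[
(\cB^k)_{i,j}=\sum_{i<m_1<\cdots<m_{k-1}<j}\cB_{i,m_1}\cB_{m_1,m_2}\cdots\cB_{m_{k-1},j},
\]
with the empty sum equal to $0$ when $j-i<k$. Setting $i=\ell$, $j=L$, $k=m$ and relabelling the intermediate indices decreasingly as $L>j_1>\cdots>j_{m-1}>\ell$ gives exactly the inner sum of \eqref{eq:path-general}. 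The $k=0$ term contributes only to block $L$, namely $\bG_L$; this is the empty path and accounts for the case $\ell=L$. For $\ell<L$ the $k=0$ term vanishes. The terms with $m>L-\ell$ vanish because no strictly increasing chain of that length fits between $\ell$ and $L$, so the outer sum runs over $1\le m\le L-\ell$.

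Finally I will reconcile this with the ``sum over paths'' phrasing. The index-chain sum includes chains in which some factor $\cB_{j_{r},j_{r-1}}$ is the zero block. Such terms contribute zero, so restricting to chains whose factors are all nonzero leaves the sum unchanged. Thus the sum equals the sum of path weights as defined in the lemma. The order of the product, $\cB_{\ell,j_{m-1}}\cdots\cB_{j_1,L}$, matches the stated weight, since the matrix product is read from block $\ell$ leftmost to block $L$ rightmost.

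The main obstacle is bookkeeping rather than any real difficulty. One must carry out the induction carefully so that the orientation of indices (increasing $i<m_1<\cdots$ versus the decreasing path labelling) and the ordering of the noncommuting factors come out consistently. One must also state precisely that the adjoint in the skip-connection setting is defined through $(I-\cB)\Xs=\bG$, since \Cref{thm:global_fixed_point} was proved only for the bidiagonal $\cB$.
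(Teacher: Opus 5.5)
Your proposal is correct and follows essentially the same route as the paper. Both arguments expand $(\cB^m\bG)_\ell=(\cB^m)_{\ell,L}\bG_L$ by block multiplication, use strict upper-triangularity to keep only strictly increasing index chains, and sum over $m$ through the finite Neumann series; your version is somewhat more careful, since you do the expansion as an explicit induction, you state that $\Xs$ is defined by $(I-\cB)\Xs=\bG$ in the skip-connection setting (the paper uses this only implicitly), and you correctly say that chains with a zero factor simply contribute nothing, where the paper overstates this as an ``if and only if''.
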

\begin{proof}
By the definition of block matrix multiplication, the $\ell$-th block of $\cB^m \bG$ is given by
\begin{equation}
    (\cB^{m}\bG)_\ell = \sum_{i_1,\ldots,i_{m-1}=1}^L 
    \cB_{\ell,i_{m-1}}\cB_{i_{m-1},i_{m-2}}\cdots\cB_{i_1,L}\,\bG_L.
\end{equation}
Because $\cB$ is strictly block upper-triangular, any factor $\cB_{a,b}$ is identically zero unless $b > a$. Consequently, a term in the summation is nonzero if and only if the intermediate indices form a strictly increasing chain:
\begin{equation}
    \ell < i_{m-1} < i_{m-2} < \cdots < i_1 < L.
\end{equation}
Reversing the order of multiplication to match the forward flow of indices, the weight of such a valid chain is exactly the ordered matrix product $\cB_{\ell,i_{m-1}}\cB_{i_{m-1},i_{m-2}}\cdots\cB_{i_1,L}$. 
Summing over all possible path lengths $m=0, \ldots, L-1$ (noting that $\cB^L=0$ by nilpotency) and collecting all valid chains yields exactly the sum over all paths $P: L \to \ell$ as defined in the statement. For $m=0$, the chain is empty, yielding the identity operator on $\bG_L$, which gives $X_*^{(L)} = \bG_L$.
\end{proof}
\begin{remark}
\Cref{prop:path} is the special case in which $\cB$ is strictly
bidiagonal ($\cB_{i,j}=0$ unless $j=i+1$): there each chain
$\ell<i_{m-1}<\cdots<L$ is forced to be the single consecutive run
$\ell,\ell+1,\ldots,L$, so exactly one path exists and the sum
in~\eqref{eq:path-general} collapses to one term, recovering
\Cref{rem:collapse}. \Cref{lem:path-general} shows that this
single-path collapse is not a separate phenomenon but simply the
bidiagonal instance of the same general path-sum identity that
governs arbitrary DAG-structured backward operators, including
those induced by skip connections (\Cref{ex:residual}).
\end{remark}
\begin{example}[Complete path-expansion for $L=5$ with illustrative figures]\label{ex:L5-paths-final}
To fully illustrate \Cref{lem:path-general} in a generic setting with skip connections, we instantiate the global backward operator for a network of depth $L=5$. Unlike strictly feedforward networks, we allow off-bidiagonal blocks $\cB_{i,j}$ ($j > i+1$) to represent arbitrary skip connections, resulting in a dense upper-triangular structure.
\paragraph{Refined path-expansion equation.}
For $L=5$, the general path-expansion~\eqref{eq:path-general} takes the explicit form:
\begin{equation}\label{eq:path-L5}
X_{*}^{(\ell)} = \sum_{m=1}^{5-\ell} \sum_{5 > j_1 > \dots > j_{m-1} > \ell} 
\cB_{\ell, j_{m-1}} \cB_{j_{m-1}, j_{m-2}} \cdots \cB_{j_1, 5} \,\bG_5,
\end{equation}
where the inner sum ranges over all strictly decreasing sequences of $m-1$ intermediate layers connecting layer $5$ to layer $\ell$.
\paragraph{Block matrix structure.}
The global backward operator $\cB$ is a $5 \times 5$ block strictly upper-triangular matrix:
\begin{equation}
\cB = \begin{pmatrix}
0 & \cB_{1,2} & \cB_{1,3} & \cB_{1,4} & \cB_{1,5} \\
0 & 0 & \cB_{2,3} & \cB_{2,4} & \cB_{2,5} \\
0 & 0 & 0 & \cB_{3,4} & \cB_{3,5} \\
0 & 0 & 0 & 0 & \cB_{4,5} \\
0 & 0 & 0 & 0 & 0
\end{pmatrix}.
\end{equation}
\paragraph{Computational graph illustration.}
\Cref{fig:L5-paths-final} visualises the backward computational graph with gradient flowing from right to left. All edges are routed \emph{concentrically above the nodes} with curvature strictly increasing by hop length to prevent any visual crossings:
\begin{itemize}
    \item \textbf{Sequential (blue, solid):} Standard layer-to-layer connections ($j = i+1$), bend angle $15^\circ$.
    \item \textbf{Skip 1 (red, dashed):} Connections skipping one layer ($j = i+2$), bend angle $30^\circ$.
    \item \textbf{Skip 2 (green, dotted):} Connections skipping two layers ($j = i+3$), bend angle $45^\circ$.
    \item \textbf{Skip 3 (orange, dash-dotted):} Connections skipping three layers ($j = i+4$), bend angle $60^\circ$.
\end{itemize}
\begin{figure}[htbp]
\centering
\begin{tikzpicture}[
    node distance=2.5cm,
    layer/.style={draw, circle, minimum size=1.1cm, font=\bfseries\large, fill=blue!5, thick, draw=blue!70!black},
    feed/.style={->, >=Stealth, thick, blue!80!black},
    skip1/.style={->, >=Stealth, thick, dashed, red!80!black},
    skip2/.style={->, >=Stealth, thick, dotted, green!60!black},
    skip3/.style={->, >=Stealth, thick, dashdotted, orange!90!black},
    lbl/.style={font=\scriptsize, fill=white, inner sep=1.5pt, text=black, draw=none}
]
% Nodes (L1 on left, L5 on right for natural backward flow)
\node[layer] (L1) {$1$};
\node[layer, right=of L1] (L2) {$2$};
\node[layer, right=of L2] (L3) {$3$};
\node[layer, right=of L3] (L4) {$4$};
\node[layer, right=of L4] (L5) {$5$};
% Sequential (lowest arc, bend=15)
\draw[feed] (L5) to[bend right=15] node[lbl, above, sloped] {$\cB_{4,5}$} (L4);
\draw[feed] (L4) to[bend right=15] node[lbl, above, sloped] {$\cB_{3,4}$} (L3);
\draw[feed] (L3) to[bend right=15] node[lbl, above, sloped] {$\cB_{2,3}$} (L2);
\draw[feed] (L2) to[bend right=15] node[lbl, above, sloped] {$\cB_{1,2}$} (L1);
% 1-hop skips (second arc, bend=30)
\draw[skip1] (L5) to[bend right=30] node[lbl, above, sloped] {$\cB_{3,5}$} (L3);
\draw[skip1] (L4) to[bend right=30] node[lbl, above, sloped] {$\cB_{2,4}$} (L2);
\draw[skip1] (L3) to[bend right=30] node[lbl, above, sloped] {$\cB_{1,3}$} (L1);
% 2-hop skips (third arc, bend=45)
\draw[skip2] (L5) to[bend right=45] node[lbl, above, sloped] {$\cB_{2,5}$} (L2);
\draw[skip2] (L4) to[bend right=45] node[lbl, above, sloped] {$\cB_{1,4}$} (L1);
% 3-hop skip (highest arc, bend=60)
\draw[skip3] (L5) to[bend right=60] node[lbl, above, sloped] {$\cB_{1,5}$} (L1);
% Legend
\begin{scope}[shift={(0.5,-2.5)}]
    \draw[feed] (0,0) -- (0.8,0);
    \node[font=\scriptsize, anchor=west, text=black] at (1.0,0) {Sequential};
    \draw[skip1] (3.0,0) -- (3.8,0);
    \node[font=\scriptsize, anchor=west, text=black] at (4.0,0) {Skip 1};
    \draw[skip2] (6.0,0) -- (6.8,0);
    \node[font=\scriptsize, anchor=west, text=black] at (7.0,0) {Skip 2};
    \draw[skip3] (9.0,0) -- (9.8,0);
    \node[font=\scriptsize, anchor=west, text=black] at (10.0,0) {Skip 3};
\end{scope}
\end{tikzpicture}
\caption{Computational graph for the backward pass ($L=5$) with all possible skip connections. Gradient flows from right to left. All edges are routed concentrically above the nodes with curvature increasing by hop length ($15^\circ$, $30^\circ$, $45^\circ$, $60^\circ$), ensuring zero crossings and perfect visual separation of the $15$ distinct paths.}
\label{fig:L5-paths-final}
\end{figure}
\paragraph{Explicit path equations.}
Applying \Cref{lem:path-general}, the adjoint variable $X_{*}^{(\ell)}$ is the sum of the weights of all valid paths from layer $5$ to layer $\ell$. We group terms by hop count $m$:
\begin{align}
X_{*}^{(5)} &= \bG_5, \tag{empty path, $m=0$, 1 path} \\[6pt]
X_{*}^{(4)} &= \underbrace{\cB_{4,5}\,\bG_5}_{\text{seq. } 5\to4}, \tag{$m=1$, 1 path} \\[6pt]
X_{*}^{(3)} &= \underbrace{\cB_{3,5}\,\bG_5}_{\text{skip 1}} 
+ \underbrace{\cB_{3,4}\cB_{4,5}\,\bG_5}_{\text{seq. } 5\to4\to3}, \tag{$m=1,2$, 2 paths} \\[6pt]
X_{*}^{(2)} &= \underbrace{\cB_{2,5}\,\bG_5}_{\text{skip 2}} 
+ \underbrace{\cB_{2,4}\cB_{4,5}\,\bG_5 + \cB_{2,3}\cB_{3,5}\,\bG_5}_{\text{skip 1}} 
+ \underbrace{\cB_{2,3}\cB_{3,4}\cB_{4,5}\,\bG_5}_{\text{seq.}}, \tag{$m=1,2,3$, 4 paths} \\[6pt]
X_{*}^{(1)} &= \underbrace{\cB_{1,5}\,\bG_5}_{\text{skip 3}} 
+ \underbrace{\cB_{1,4}\cB_{4,5}\,\bG_5 + \cB_{1,3}\cB_{3,5}\,\bG_5 + \cB_{1,2}\cB_{2,5}\,\bG_5}_{\text{skip 2}} \nonumber \\
&\quad + \underbrace{\cB_{1,3}\cB_{3,4}\cB_{4,5}\,\bG_5 + \cB_{1,2}\cB_{2,4}\cB_{4,5}\,\bG_5 + \cB_{1,2}\cB_{2,3}\cB_{3,5}\,\bG_5}_{\text{skip 1}} 
+ \underbrace{\cB_{1,2}\cB_{2,3}\cB_{3,4}\cB_{4,5}\,\bG_5}_{\text{seq.}}. \tag{$m=1,2,3,4$, 8 paths}
\end{align}
\paragraph{Combinatorial structure.}
The number of distinct paths from layer $5$ to layer $\ell$ is exactly $2^{5-\ell-1}$ for $\ell < 5$:
\begin{itemize}
    \item Layer $4$: $2^0 = 1$ path
    \item Layer $3$: $2^1 = 2$ paths
    \item Layer $2$: $2^2 = 4$ paths
    \item Layer $1$: $2^3 = 8$ paths
\end{itemize}
This demonstrates the \emph{combinatorial explosion} of the Neumann series when skip connections are fully dense: layer $1$ receives gradient information through $8$ distinct computational paths, as visualised in \Cref{fig:L5-paths-final}.
\paragraph{Single-path collapse for strictly feedforward networks.}
If the network is strictly feedforward (no skip connections), all off-bidiagonal blocks $\cB_{i,j}$ ($j > i+1$) are identically zero. The equations collapse to:
\begin{align}
X_{*}^{(4)} &= \cB_{4,5}\,\bG_5, \\
X_{*}^{(3)} &= \cB_{3,4}\cB_{4,5}\,\bG_5, \\
X_{*}^{(2)} &= \cB_{2,3}\cB_{3,4}\cB_{4,5}\,\bG_5, \\
X_{*}^{(1)} &= \cB_{1,2}\cB_{2,3}\cB_{3,4}\cB_{4,5}\,\bG_5,
\end{align}
recovering the \emph{single-path collapse} of \Cref{rem:collapse}: each layer receives its adjoint from exactly one term in the Neumann series, corresponding to the unique sequential path. This is the special case where the general path-sum~\eqref{eq:path-L5} reduces to a single term for each $\ell$.
\end{example}
\subsection{Global theory of F-symmetry}\label{subsec:fsym}
\begin{definition}[F-symmetric network]
\label{def:fsym-global}
A network $\NN[N_0,\ldots,N_L]$ is \emph{F-symmetric} (at a given input $X^{(0)}$ and seed $X_{*}^{(L)}$) if the F-adjoint activation sequence coincides with the F-propagation activation sequence, i.e.,
\begin{equation}\label{eq:fsym-def}
\Xs = \X.
\end{equation}
\end{definition}
\begin{remark}[The boundary term $X_{*}^{(0)}$]\label{rem:layer-zero}
The stacked vectors $\Xs$ and $\X$ are built from layers $1,\ldots,L$ only. The definition of F-symmetry compares $\FF$ and $\FFA$ only on the indices $\ell=1,\ldots,L$ that both sequences share; it deliberately says nothing about $X^{(0)}$ versus $X_{*}^{(0)}$.
\end{remark}
\begin{proposition}[Fixed-point characterisation of F-symmetry]
\label{prop:fsym-fixedpoint-char}
For a fixed seed $X_{*}^{(L)}=X^{(L)}$, the network is F-symmetric if and only if the forward activations $\X$ satisfy the global fixed-point equation:
\begin{equation}\label{eq:fsym-operator}
(I-\cB)\X = \bG, \qquad \bG = (0,\ldots,0,X^{(L)})\transp.
\end{equation}
\end{proposition}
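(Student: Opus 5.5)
The plan is to reduce the statement to the uniqueness part of \Cref{cor:equivalence}, which rests on the invertibility of $I-\cB$ from \Cref{cor:invertible}. Throughout, the seed is fixed as $X_*^{(L)}=X^{(L)}$. The source vector of \Cref{def:source} is then exactly $\bG=(0,\ldots,0,X^{(L)})\transp$, and $\cB=W\transp\Sigmap$ is built from the forward pre-activations $Y^{(\ell)}$ and the weights only. In particular, $\cB$ does not depend on the adjoint variables, so both $\X$ and $\Xs$ can be tested against the same linear system.

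\textbf{Forward direction.} Assume the network is F-symmetric, i.e.\ $\Xs=\X$. By \Cref{thm:global_fixed_point}, the F-adjoint with seed $X^{(L)}$ satisfies $(I-\cB)\Xs=\bG$ with the $\bG$ above. Substituting $\Xs=\X$ gives $(I-\cB)\X=\bG$, which is \eqref{eq:fsym-operator}.

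\textbf{Converse.} Assume $(I-\cB)\X=\bG$. The F-adjoint stacked vector $\Xs$ also satisfies $(I-\cB)\Xs=\bG$, again by \Cref{thm:global_fixed_point}. Subtracting the two equations gives $(I-\cB)(\X-\Xs)=0$. Since $\cB^L=0$ (\Cref{thm:nilpotency}), $I-\cB$ is invertible with inverse $\sum_{k=0}^{L-1}\cB^k$ (\Cref{cor:invertible}), so $\X=\Xs$. Equivalently, \Cref{cor:neumann} shows that both vectors equal $\sum_{k=0}^{L-1}\cB^k\bG$.

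As a cross-check, one can instead use the converse half of \Cref{thm:global_fixed_point}. It says that $\X$ satisfies the layerwise recursions $X^{(\ell-1)}=(W^{(\ell)})\transp(X^{(\ell)}\had\sigma'(Y^{(\ell)}))$ for $\ell=2,\ldots,L$, with terminal value $X^{(L)}$. The F-adjoint satisfies the same recursions with the same terminal value. A downward induction on $\ell$ from $L$ then gives $X^{(\ell)}=X_*^{(\ell)}$ for every $\ell$.

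There is no real obstacle; the argument is bookkeeping. The one point to check is that $\cB$ is genuinely fixed, depending only on the forward pass and not on the variable being solved for. Otherwise the substitution $\Xs=\X$ would not preserve the operator. Consistent with \Cref{rem:layer-zero}, the characterisation involves only blocks $1,\ldots,L$ and says nothing about $X_*^{(0)}$ versus $X^{(0)}$.
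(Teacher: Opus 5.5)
Your proof is correct and is essentially the paper's argument: $\Xs$ always solves $(I-\cB)\Xs=\bG$ by \Cref{thm:global_fixed_point}, and the invertibility of $I-\cB$ (\Cref{cor:invertible}) makes this the unique solution, so $\Xs=\X$ exactly when $\X$ solves the same system. Your layerwise back-substitution cross-check is valid but not needed, and so is your remark that $\cB$ depends only on the forward pass.
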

\begin{proof}
By \Cref{thm:global_fixed_point}, $\Xs$ always satisfies $(I-\cB)\Xs=\bG$. By \Cref{cor:equivalence}, $I-\cB$ is invertible, so this is the unique solution. Hence $\Xs=\X$ if and only if $\X$ satisfies the equation.
\end{proof}
\begin{proposition}[Orthogonal networks are F-symmetric]
\label{prop:fsym-orthogonal-global}
Suppose $\sigma=\mathrm{Id}$, $N_1=\cdots=N_L=:N$, and each $W^{(\ell)}\in\R^{N\times N}$, $\ell=2,\ldots,L$, is orthogonal, $(W^{(\ell)})\transp W^{(\ell)}=I_N$. Set the seed $X_{*}^{(L)}:=X^{(L)}$. Then the network is F-symmetric.
\end{proposition}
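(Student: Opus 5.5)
We verify F-symmetry through \Cref{prop:fsym-fixedpoint-char}: it suffices to show that the forward activation vector $\X$ solves $(I-\cB)\X=\bG$ with $\bG=(0,\ldots,0,X^{(L)})\transp$. Equivalently, by the block unpacking in Steps 5--7 of the proof of \Cref{thm:global_fixed_point}, we check the terminal block and then the $L-1$ super-diagonal blocks one at a time.

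First we specialise the objects to $\sigma=\mathrm{Id}$. Then $\sigma'\equiv 1$, so $\Dell_\ell=I_N$ for every $\ell$ and $\Sigmap=I$. By \Cref{prop:structureB} this gives $\cB_{\ell,\ell+1}=(W^{(\ell+1)})\transp$ for $\ell=1,\ldots,L-1$. The forward pass reduces to $X^{(\ell)}=Y^{(\ell)}=W^{(\ell)}X^{(\ell-1)}$.

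Next we check the blocks. The $L$-th block of $(I-\cB)\X$ is $X^{(L)}$, since row $L$ of $\cB$ vanishes, and this equals $\bG_L$ by the choice of seed. For $\ell\in\{1,\ldots,L-1\}$ the $\ell$-th block is
\[
X^{(\ell)}-(W^{(\ell+1)})\transp X^{(\ell+1)} = X^{(\ell)}-(W^{(\ell+1)})\transp W^{(\ell+1)}X^{(\ell)} = X^{(\ell)} - X^{(\ell)} = 0,
\]
using the forward relation $X^{(\ell+1)}=W^{(\ell+1)}X^{(\ell)}$ together with orthogonality. These blocks match the zero blocks of $\bG$. Hence $(I-\cB)\X=\bG$, and \Cref{prop:fsym-fixedpoint-char} yields $\Xs=\X$.

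No step is a real obstacle. The one point needing care is index bookkeeping: the identity we use, $(W^{(\ell+1)})\transp W^{(\ell+1)}=I_N$, involves only $W^{(2)},\ldots,W^{(L)}$, which is exactly the range over which orthogonality is assumed. $W^{(1)}$ never enters, consistent with \Cref{def:globalW} and with \Cref{rem:layer-zero}, which excludes the layer-$0$ comparison. The square-width hypothesis $N_1=\cdots=N_L$ is what makes orthogonality of these $W^{(\ell)}$ meaningful. As an alternative check, the Neumann form \Cref{cor:neumann} with \Cref{prop:path} gives $X_*^{(\ell)}=(W^{(\ell+1)})\transp\cdots(W^{(L)})\transp X^{(L)}$, and substituting $X^{(L)}=W^{(L)}\cdots W^{(\ell+1)}X^{(\ell)}$ telescopes to $X^{(\ell)}$.
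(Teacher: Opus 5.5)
Your proof is correct, but it is routed differently from the paper's.

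The paper never touches $\cB$. It runs a downward induction on the layerwise recursion starting from the seed: $X_*^{(\ell-1)}=(W^{(\ell)})\transp X_*^{(\ell)}=(W^{(\ell)})\transp W^{(\ell)}X^{(\ell-1)}=X^{(\ell-1)}$. You instead verify, block by block, that the candidate $\X$ solves $(I-\cB)\X=\bG$. You then obtain $\Xs=\X$ from \Cref{prop:fsym-fixedpoint-char}, that is, from uniqueness of the solution, which rests on the invertibility of $I-\cB$. The decisive identity $(W^{(\ell+1)})\transp W^{(\ell+1)}X^{(\ell)}=X^{(\ell)}$ is the same in both.

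The paper's route is more elementary: running the recursion, which is just back-substitution, produces the unique solution directly. Yours follows the verify-then-invoke-uniqueness pattern of the paper's proof of \Cref{prop:gen-fsym}. In fact your block check is exactly condition~\eqref{eq:gen-fsym-weight} with $\sigma=\mathrm{Id}$, so it shows how this proposition fits inside the general theory.

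Your explicit index bookkeeping is also a small gain in care. The paper leaves implicit that its inductive step is applied only for $\ell=L,\ldots,2$, where orthogonality is assumed.

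Finally, both arguments use only $(W^{(\ell)})\transp W^{(\ell)}=I$. They therefore go through unchanged for tall weight matrices ($N_\ell\ge N_{\ell-1}$) with orthonormal columns. Equal widths are needed for the word ``orthogonal'', not for the conclusion.
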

\begin{proof}
We show $X_{*}^{(\ell)}=X^{(\ell)}$ by downward induction. Base case: $X_{*}^{(L)}=X^{(L)}$ by choice of seed. Inductive step: if $X_{*}^{(\ell)}=X^{(\ell)}$, then $X_{*}^{(\ell-1)} = (W^{(\ell)})^T X_{*}^{(\ell)} = (W^{(\ell)})^T X^{(\ell)} = (W^{(\ell)})^T W^{(\ell)} X^{(\ell-1)} = X^{(\ell-1)}$ by orthogonality. Thus $\Xs=\X$.
\end{proof}
\begin{proposition}[Spectral characterisation of F-symmetry]
\label{prop:fsym-spectral}
Suppose $\sigma=\mathrm{Id}$, $N_1=\cdots=N_L=:N$, and every input $X^{(0)}$ is \emph{admissible}, meaning $X^{(\ell-1)}$ ranges over all of $\R^N$ for each $\ell=2,\ldots,L$. Fix the seed $X_{*}^{(L)}=X^{(L)}$. Then the network is F-symmetric if and only if $(W^{(\ell)})\transp W^{(\ell)}=I_N$ for every $\ell=2,\ldots,L$.
\end{proposition}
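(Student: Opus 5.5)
The plan is to prove the two directions separately, reducing each to a layerwise statement. The sufficiency direction is already available: if $(W^{(\ell)})\transp W^{(\ell)}=I_N$ for every $\ell=2,\ldots,L$, then \Cref{prop:fsym-orthogonal-global} yields $\Xs=\X$ for every input, and in particular for every admissible one. The main work is therefore necessity.

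For necessity, I would first record what the recursion becomes under $\sigma=\mathrm{Id}$. In that case $\Dell_\ell=I_N$ for every $\ell$, so \Cref{prop:structureB} gives $\cB_{\ell-1,\ell}=(W^{(\ell)})\transp$. F-symmetry means $X_*^{(\ell)}=X^{(\ell)}$ for all $\ell=1,\ldots,L$. Taking block $\ell-1$ of the fixed-point equation in \Cref{prop:fsym-fixedpoint-char}, equivalently step $\ell$ of the layerwise recursion \eqref{eq:fadjoint-rec}, gives
\[
X^{(\ell-1)} = X_*^{(\ell-1)} = (W^{(\ell)})\transp X_*^{(\ell)} = (W^{(\ell)})\transp X^{(\ell)} = (W^{(\ell)})\transp W^{(\ell)} X^{(\ell-1)}
\]
for each $\ell=2,\ldots,L$. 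The last equality uses the forward relation $X^{(\ell)}=W^{(\ell)}X^{(\ell-1)}$.

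The next step is to pass from this identity on particular vectors to an operator identity. Set $M_\ell:=(W^{(\ell)})\transp W^{(\ell)}-I_N$. The display says $M_\ell X^{(\ell-1)}=0$. The network is assumed F-symmetric for every admissible input, and admissibility means $X^{(\ell-1)}$ sweeps all of $\R^N$. Hence $M_\ell v=0$ for all $v\in\R^N$, and so $M_\ell=0$. Equivalently, one can test on the standard basis vectors $e_1,\ldots,e_N$, which are attained as values of $X^{(\ell-1)}$ by admissibility.

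The main obstacle is the quantifier structure, not any computation. \Cref{def:fsym-global} is pointwise, fixing one input and one seed. The spectral statement must therefore be read as ``F-symmetric at every admissible input''. I would state that reading explicitly at the start of the proof, since otherwise necessity fails: a single vector $X^{(\ell-1)}$ only constrains $M_\ell$ on its span. I would also note that each block equation involves only $W^{(\ell)}$ and the vectors $X^{(\ell-1)}$ and $X^{(\ell)}$. Layers can thus be handled independently, with no induction needed for this direction. Finally, the layer-$0$ boundary is irrelevant by \Cref{rem:layer-zero}, which is why $W^{(1)}$ is unconstrained.
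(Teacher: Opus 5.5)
Your proposal is correct and follows essentially the same route as the paper: sufficiency from \Cref{prop:fsym-orthogonal-global}, and necessity from the block equations $X^{(\ell-1)}=(W^{(\ell)})\transp W^{(\ell)}X^{(\ell-1)}$ combined with admissibility to force $(W^{(\ell)})\transp W^{(\ell)}=I_N$. You also state explicitly the ``F-symmetric at every admissible input'' reading, which the paper's proof uses but leaves implicit.
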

\begin{proof}
($\Leftarrow$) This is \Cref{prop:fsym-orthogonal-global}.
($\Rightarrow$) F-symmetry forces $(I-\cB)\X = \bG$, which block-by-block means $X^{(\ell-1)} = (W^{(\ell)})^T X^{(\ell)} = (W^{(\ell)})^T W^{(\ell)} X^{(\ell-1)}$ for every attainable $X^{(\ell-1)}$. By admissibility, this holds for all vectors in $\R^N$, so $(W^{(\ell)})^T W^{(\ell)}=I_N$.
\end{proof}
\begin{proposition}[Layerwise condition for F-symmetry, general activation]
\label{prop:gen-fsym}
Let $\sigma$ be an arbitrary coordinate-wise differentiable activation. Fix an input $X^{(0)}$, let $\FF$ be the resulting F-propagation, and fix the seed $X_{*}^{(L)}:=X^{(L)}$. Then the network is F-symmetric \emph{at this input} if and only if, for every $\ell=1,\ldots,L-1$,
\begin{equation}\label{eq:gen-fsym-weight}
\sigma\bigl(Y^{(\ell)}\bigr) = (W^{(\ell+1)})\transp \Bigl( \sigma\bigl(Y^{(\ell+1)}\bigr) \had \sigma'\bigl(Y^{(\ell+1)}\bigr) \Bigr)
\tag{C$_\ell$}
\end{equation}
holds at the realised pre-activations.
\end{proposition}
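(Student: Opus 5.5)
The plan is to reduce the statement to \Cref{prop:fsym-fixedpoint-char} and then unpack the global fixed-point equation block by block, exactly as in Steps 5--7 of the proof of \Cref{thm:global_fixed_point}. Fix the input $X^{(0)}$, so the forward states $\X=(X^{(1)},\ldots,X^{(L)})\transp$ and the pre-activations $Y^{(\ell)}$ (hence $\Dell_\ell$ and $\cB$) are fixed. Fix the seed $X_*^{(L)}=X^{(L)}$, so $\bG=(0,\ldots,0,X^{(L)})\transp$. By \Cref{prop:fsym-fixedpoint-char}, F-symmetry at this input is equivalent to $(I-\cB)\X=\bG$. That proposition relies only on the invertibility of $I-\cB$ (\Cref{cor:invertible}), which holds for any activation, so it applies without the hypothesis $\sigma=\mathrm{Id}$.

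Next I will write $(I-\cB)\X=\bG$ blockwise using \Cref{prop:structureB}, which says the only nonzero blocks are $\cB_{\ell,\ell+1}=(W^{(\ell+1)})\transp\Dell_{\ell+1}$.
\begin{itemize}
\item The block $L$ reads $X^{(L)}=X^{(L)}$. This holds trivially and reflects the choice of seed.
\item For $\ell=1,\ldots,L-1$, block $\ell$ reads
\[
X^{(\ell)}-(W^{(\ell+1)})\transp\Dell_{\ell+1}X^{(\ell+1)}=0 .
\]
\end{itemize}
I then substitute $X^{(\ell)}=\sigma(Y^{(\ell)})$ and $X^{(\ell+1)}=\sigma(Y^{(\ell+1)})$ from \eqref{eq:fprop-rec}, and use $\Dell_{\ell+1}v=v\had\sigma'(Y^{(\ell+1)})$ for the diagonal matrix $\Dell_{\ell+1}=\diag(\sigma'(Y^{(\ell+1)}))$. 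With these, the block-$\ell$ equation becomes literally \eqref{eq:gen-fsym-weight}.

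The system $(I-\cB)\X=\bG$ is the conjunction of its $L$ block equations, and the last one is automatic. Hence it holds if and only if (C$_\ell$) holds for every $\ell=1,\ldots,L-1$, which gives both directions at once.

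As a cross-check of the ($\Leftarrow$) direction, I could instead argue by downward induction as in \Cref{prop:fsym-orthogonal-global}. Suppose $X_*^{(\ell+1)}=X^{(\ell+1)}$. The F-adjoint recursion \eqref{eq:fadjoint-rec} gives $X_*^{(\ell)}=(W^{(\ell+1)})\transp(X^{(\ell+1)}\had\sigma'(Y^{(\ell+1)}))$, and (C$_\ell$) says this equals $X^{(\ell)}$. For ($\Rightarrow$), equality at each level substituted into the same recursion yields (C$_\ell$).

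There is no real obstacle here. The only point needing care is that the derivatives $\sigma'(Y^{(\ell+1)})$ are evaluated at the forward pre-activations, so $\cB$ is a fixed matrix determined by the input and the equivalence is pointwise in $X^{(0)}$. I will also note that layer $0$ is excluded, consistent with \Cref{rem:layer-zero}.
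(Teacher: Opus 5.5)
Your proposal is correct and follows the paper's proof essentially verbatim. Like the paper, you reduce to \Cref{prop:fsym-fixedpoint-char}, unpack $(I-\cB)\X=\bG$ blockwise via \Cref{prop:structureB}, and substitute $X^{(\ell)}=\sigma(Y^{(\ell)})$ and $\Dell_{\ell+1}v=v\had\sigma'(Y^{(\ell+1)})$, with the block-$L$ equation holding automatically by the choice of seed; your downward-induction cross-check is also valid and avoids invoking invertibility of $I-\cB$, but it is not needed.
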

\begin{proof}
By \Cref{prop:fsym-fixedpoint-char}, F-symmetry is equivalent to $(I-\cB)\X = \bG$. Unpacking this block equation for $\ell = 1, \dots, L-1$ yields $X^{(\ell)} = \cB_{\ell, \ell+1} X^{(\ell+1)}$. Substituting the definitions $X^{(\ell)} = \sigma(Y^{(\ell)})$, $\cB_{\ell, \ell+1} = (W^{(\ell+1)})^T \Dell_{\ell+1}$, and $\Dell_{\ell+1} X^{(\ell+1)} = \diag(\sigma'(Y^{(\ell+1)})) \sigma(Y^{(\ell+1)}) = \sigma(Y^{(\ell+1)}) \had \sigma'(Y^{(\ell+1)})$, we obtain exactly condition \eqref{eq:gen-fsym-weight}. The block equation for $\ell=L$ is $X^{(L)} = X^{(L)}$, which is satisfied by the choice of seed.
\end{proof}
\begin{remark}[Decoupling of the $X$ and $Y$ sequences]
\label{rem:gen-fsym-scope}
Under \Cref{def:fsym-global}, F-symmetry strictly requires only the coincidence of the post-activation sequences ($\Xs = \X$). It deliberately does not require the pre-activation sequences to coincide ($\Ys = \Y$). Indeed, $\Ys = \Y$ would impose the much stronger pointwise condition $\sigma(Y_i)\sigma'(Y_i) = Y_i$ at every coordinate, which forces $\sigma = \pm \mathrm{Id}$. By restricting F-symmetry to $\Xs = \X$, we obtain a meaningful, non-vacuous concept for arbitrary nonlinear activations $\sigma$, characterised cleanly by the layerwise consistency condition \eqref{eq:gen-fsym-weight}.
\end{remark}
% ---------------------------------------------------------------
\section{Illustrative Examples}\label{sec:results}
% ---------------------------------------------------------------
This section makes every construction of \Cref{sec:methodology} concrete
through four progressively complex worked examples. We begin with a
complete small network that illustrates the full global machinery
(\Cref{ex:global-structure}), then examine the scalar case to expose the
single-path collapse phenomenon (\Cref{ex:scalar}), verify the framework
for non-square weight matrices (\Cref{ex:vector1}), and conclude with a
comparative example showing how residual connections fundamentally alter
the path structure (\Cref{ex:residual}).
\begin{example}[Complete global system for a small network]
\label{ex:global-structure}
This example walks through the entire global-operator construction
for a concrete $L=3$ network, demonstrating the equivalence between
the Neumann series, direct inversion, and classical layerwise
backpropagation.
\paragraph{Network configuration.}
Let $N_1=2$, $N_2=2$, $N_3=1$ (a scalar-output regression network),
with weight matrices
\[
W^{(2)} = \begin{pmatrix} 2 & 0 \\ 1 & 1 \end{pmatrix}, \qquad
W^{(3)} = \begin{pmatrix} 1 & -1 \end{pmatrix}.
\]
At the current input, suppose the activation derivatives are
\[
\sigma'(Y^{(2)}) = (0.5,\; 1), \qquad \sigma'(Y^{(3)}) = 2,
\]
so that
\[
\Dell_2 = \begin{pmatrix} 0.5 & 0 \\ 0 & 1 \end{pmatrix}, \qquad
\Dell_3 = \begin{pmatrix} 2 \end{pmatrix}.
\]
Take the loss gradient at the output to be $g = \nabla_{X^{(3)}}\loss = 1$.
\paragraph{Step 1: Assemble the backward blocks.}
By \Cref{prop:structureB},
\[
\cB_{1,2} = (W^{(2)})^T\Dell_2
= \begin{pmatrix} 2 & 1 \\ 0 & 1 \end{pmatrix}
\begin{pmatrix} 0.5 & 0 \\ 0 & 1 \end{pmatrix}
= \begin{pmatrix} 1 & 1 \\ 0 & 1 \end{pmatrix},
\]
\[
\cB_{2,3} = (W^{(3)})^T\Dell_3
= \begin{pmatrix} 1 \\ -1 \end{pmatrix}(2)
= \begin{pmatrix} 2 \\ -2 \end{pmatrix}.
\]
\paragraph{Step 2: The global linear system.}
With $N = N_1+N_2+N_3 = 5$, the full system $(I-\cB)\Xs = \bG$ is
\[
\left(\begin{array}{cc|cc|c}
1 & 0 & -1 & -1 & 0 \\
0 & 1 & 0 & -1 & 0 \\ \hline
0 & 0 & 1 & 0 & -2 \\
0 & 0 & 0 & 1 & 2 \\ \hline
0 & 0 & 0 & 0 & 1
\end{array}\right)
\begin{pmatrix} X_{*,1}^{(1)} \\ X_{*,2}^{(1)} \\ X_{*,1}^{(2)} \\ X_{*,2}^{(2)} \\ X_{*}^{(3)} \end{pmatrix}
=
\begin{pmatrix} 0 \\ 0 \\ 0 \\ 0 \\ 1 \end{pmatrix}.
\]
Reading this bottom-to-top is precisely block back-substitution
(\Cref{prop:backsub}), yielding
\[
X_*^{(3)} = 1,\quad
X_*^{(2)} = \begin{pmatrix}2 \\ -2\end{pmatrix},\quad
X_*^{(1)} = \begin{pmatrix}0 \\ -2\end{pmatrix}.
\]
\paragraph{Step 3: Neumann series verification.}
Since $\cB^3=0$, the Neumann series has only three terms:
\[
\cB^0\bG = \begin{pmatrix}0\\0\\0\\0\\1\end{pmatrix},\quad
\cB^1\bG = \begin{pmatrix}0\\0\\2\\-2\\0\end{pmatrix},\quad
\cB^2\bG = \begin{pmatrix}0\\-2\\0\\0\\0\end{pmatrix}.
\]
Summing gives $\Xs = (0,-2,2,-2,1)^T$, matching the back-substitution
result. Direct inversion of $I-\cB$ gives the same vector, confirming
\Cref{cor:equivalence}.
\paragraph{Step 4: Consistency with layerwise backpropagation.}
Running the standard two-step recursion:
\[
X_*^{(3)} = 1,\quad
Y_*^{(3)} = 2,\quad
X_*^{(2)} = (W^{(3)})^T Y_*^{(3)} = \begin{pmatrix}2 \\ -2\end{pmatrix},
\]
\[
Y_*^{(2)} = \Dell_2 X_*^{(2)} = \begin{pmatrix}1 \\ -2\end{pmatrix},\quad
X_*^{(1)} = (W^{(2)})^T Y_*^{(2)} = \begin{pmatrix}0 \\ -2\end{pmatrix}.
\]
Stacking yields exactly the same $\Xs$, demonstrating the three-way
equivalence of the framework.
\end{example}
\begin{example}[Scalar network: single-path collapse]
\label{ex:scalar}
This example isolates the core mechanism of the framework by
considering scalar weights and activations, where the block structure
reduces to ordinary matrix multiplication.
Let $L=3$ with scalar weights $w^{(2)}, w^{(3)}$ and scalar
activation derivatives $s^{(2)} = \sigma'(Y^{(2)})$,
$s^{(3)} = \sigma'(Y^{(3)})$. The backward operator and source are
\[
\cB = \begin{pmatrix}
0 & w^{(2)}s^{(2)} & 0 \\
0 & 0 & w^{(3)}s^{(3)} \\
0 & 0 & 0
\end{pmatrix}, \qquad
\bG = \begin{pmatrix} 0 \\ 0 \\ g \end{pmatrix},
\quad g = \nabla_{X^{(3)}}\loss.
\]
The Neumann series yields
\[
\cB^0\bG = \begin{pmatrix}0\\0\\g\end{pmatrix},\quad
\cB^1\bG = \begin{pmatrix}0\\ w^{(3)}s^{(3)}g \\0\end{pmatrix},\quad
\cB^2\bG = \begin{pmatrix}w^{(2)}s^{(2)}w^{(3)}s^{(3)}g\\0\\0\end{pmatrix},
\]
so
\[
\Xs =
\begin{pmatrix}
w^{(2)}w^{(3)}s^{(2)}s^{(3)}g \\
w^{(3)}s^{(3)}g \\
g
\end{pmatrix}.
\]
\paragraph{Key observation.}
Each layer receives its adjoint from \emph{exactly one term} in the
Neumann series: layer $\ell$ gets the $(L-\ell)$-th power of $\cB$.
This is the \emph{single-path collapse} of \Cref{rem:collapse}: in
a strictly feedforward network, there is exactly one path from the
output to each layer, so the summation in the Neumann series is
trivial.
\end{example}
\begin{example}[Vector-valued network with non-square weights]
\label{ex:vector1}
This example verifies that the framework handles rectangular weight
matrices correctly, which is essential for practical network
architectures with varying layer widths.
Let $L=3$ with $N_1=2$, $N_2=3$, $N_3=2$, identity activation
($\sigma'\equiv 1$, so $\Sigmap=I_7$), and weight matrices
\[
W^{(2)} = \begin{pmatrix} 1 & 0 \\ 0 & 1 \\ 1 & 1 \end{pmatrix}
\in \R^{3\times 2}, \quad
W^{(3)} = \begin{pmatrix} 1 & 1 & 0 \\ 0 & 1 & 1 \end{pmatrix}
\in \R^{2\times 3}.
\]
Take the terminal gradient $g = (1,1)^T \in \R^2$.
\paragraph{Layerwise computation.}
Since $\sigma'\equiv 1$, $Y_*^{(\ell)} = X_*^{(\ell)}$ for all $\ell$, so
\[
X_*^{(2)} = (W^{(3)})^T g
= \begin{pmatrix}1&0\\1&1\\0&1\end{pmatrix}\begin{pmatrix}1\\1\end{pmatrix}
= \begin{pmatrix}1\\2\\1\end{pmatrix},
\]
\[
X_*^{(1)} = (W^{(2)})^T X_*^{(2)}
= \begin{pmatrix}1&0&1\\0&1&1\end{pmatrix}\begin{pmatrix}1\\2\\1\end{pmatrix}
= \begin{pmatrix}2\\3\end{pmatrix}.
\]
\paragraph{Global verification.}
With $\cB_{1,2} = (W^{(2)})^T$ and $\cB_{2,3} = (W^{(3)})^T$, the
Neumann series gives
\[
\cB^0\bG = (0,0,0,0,0,1,1)^T,\quad
\cB^1\bG = (0,0,1,2,1,0,0)^T,\quad
\cB^2\bG = (2,3,0,0,0,0,0)^T,
\]
summing to
\[
\Xs = (2,3,1,2,1,1,1)^T,
\]
which partitions as $X_*^{(1)}=(2,3)^T$, $X_*^{(2)}=(1,2,1)^T$,
$X_*^{(3)}=(1,1)^T$ --- exactly matching the layerwise result. This
confirms \Cref{cor:equivalence} for non-square weight matrices.
\end{example}
\begin{example}[Residual connection breaks single-path collapse]
\label{ex:residual}
This example demonstrates the fundamental difference between
strictly feedforward networks and architectures with skip
connections.
\paragraph{Modified architecture.}
Add a skip connection from layer $1$ to layer $3$ to the scalar
network of \Cref{ex:scalar}, with scalar weight $S^{(31)}$. To keep
the layer-3 backward block $\cB_{2,3}=(W^{(3)})^Ts^{(3)}$ of
\Cref{ex:scalar} unchanged, the skip term is merged \emph{into the
pre-activation} $Y^{(3)}$ of the existing output nonlinearity,
rather than added directly to the post-activation $X^{(3)}$:
\begin{equation}\label{eq:residual-preact}
Y^{(3)} = w^{(3)}X^{(2)} + S^{(31)}X^{(1)}
= w^{(3)}\sigma\bigl(w^{(2)}X^{(1)}\bigr) + S^{(31)}X^{(1)},
\qquad X^{(3)} = \sigma\bigl(Y^{(3)}\bigr).
\end{equation}
(This is the pointwise-activation analogue of a \emph{pre-activation}
residual merge; it differs from the additive, activation-free block
$X^{(\ell)}=X^{(\ell-1)}+\mathcal G^{(\ell)}(X^{(\ell-1)})$ used in
\Cref{sec:applications}, and is why $s^{(3)}=\sigma'(Y^{(3)})$ appears
as a common factor on \emph{both} backward paths below; the
activation-free convention of \Cref{sec:applications} is the special
case $\sigma=\mathrm{Id}$ at the merge layer, i.e.\ $s^{(3)}\equiv1$.)
Applying the two-step rule~\eqref{eq:back1}--\eqref{eq:back2} to
\eqref{eq:residual-preact} gives $Y_*^{(3)}=X_*^{(3)}s^{(3)}=gs^{(3)}$,
$X_*^{(2)}=w^{(3)}Y_*^{(3)}$, and a \emph{direct} contribution
$S^{(31)}Y_*^{(3)}$ to $X_*^{(1)}$; the global backward operator
therefore gains an extra off-bidiagonal block:
\[
\cB = \begin{pmatrix}
0 & (W^{(2)})^T s^{(2)} & (S^{(31)})^T s^{(3)} \\
0 & 0 & (W^{(3)})^T s^{(3)} \\
0 & 0 & 0
\end{pmatrix}.
\]
This operator remains strictly upper-triangular (hence nilpotent)
but is \emph{no longer bidiagonal}.
\paragraph{Multiple path contributions.}
Layer $1$ now receives adjoint information via \emph{two distinct
routes}:
\[
X_*^{(1)} = \underbrace{(S^{(31)})^T s^{(3)} g}_{\text{skip path }(k=1)}
+ \underbrace{(W^{(2)})^T s^{(2)} (W^{(3)})^T s^{(3)} g}_{\text{sequential path }(k=2)}.
\]
The first term corresponds to the direct skip connection (one
application of $\cB$), while the second corresponds to the
sequential route through layer $2$ (two applications of $\cB$).
\paragraph{Interpretation.}
This algebraic decomposition formalises the known intuition that
residual networks behave like ensembles of paths~\cite{veit2016residual}.
The Neumann series summation, which was trivial for strict
feedforward networks, now becomes essential: each path through the
computational graph contributes one term to the sum. This is the
precise sense in which residual connections \emph{break the
single-path collapse} of \Cref{rem:collapse}.
\end{example}
\begin{remark}[Pedagogical summary of the examples]
\label{rem:examples-summary}
The four examples above progressively build intuition:
\begin{enumerate}[leftmargin=*,label=(\roman*)]
\item \Cref{ex:global-structure} demonstrates the full machinery
on a concrete network, verifying the three-way equivalence.
\item \Cref{ex:scalar} isolates the single-path collapse in the
simplest possible setting.
\item \Cref{ex:vector1} confirms the framework handles
rectangular weight matrices correctly.
\item \Cref{ex:residual} shows how architectural modifications
fundamentally alter the path structure.
\end{enumerate}
Together, these examples provide both a verification of the theory
and a pedagogical bridge from the abstract operator formulation to
practical network architectures.
\end{remark}
% ---------------------------------------------------------------
\section{Applications}\label{sec:applications}
% ---------------------------------------------------------------
The block-triangular, compositional structure of $\cB$ established in
\Cref{sec:methodology} has direct architectural consequences.
\subsection{Residual (skip-connection) networks}
A \emph{residual block}~\cite{he2016deep} computes
$X^{(\ell)} = X^{(\ell-1)} + \mathcal{G}^{(\ell)}(X^{(\ell-1)})$.
The F-adjoint propagates the cotangent as
$X_{*}^{(\ell-1)} = X_{*}^{(\ell)} + J_{\mathcal{G}^{(\ell)}}\transp X_{*}^{(\ell)}$.
The first term passes through the skip connection unmodified, giving
the \emph{gradient highway} effect. \Cref{ex:residual} exhibits the
corresponding global-operator picture: the skip connection introduces
an off-bidiagonal block in $\cB$, so the adjoint receives multiple
Neumann-series contributions, algebraically formalising the intuition
that residual networks behave like ensembles of paths~\cite{veit2016residual}.
\begin{corollary}[Path ensemble interpretation of residual networks]
\label{cor:residual-paths}
For a residual network with skip connections, the adjoint $X_{*}^{(\ell)}$
is a sum over all paths from layer $L$ to layer $\ell$ in the computational
graph, with each path contributing the product of the Jacobians along
its edges.
\end{corollary}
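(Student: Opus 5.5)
The plan is to reduce \Cref{cor:residual-paths} to \Cref{lem:path-general}. First I will build, for a residual network, a block strictly upper-triangular backward operator $\cB$ whose blocks are the Jacobian transposes of the edges. Then I will show that the adjoints satisfy $(I-\cB)\Xs=\bG$, and read off the path sum.

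\textbf{Step 1 (operator).} I model the residual network as a layered DAG on layer indices $0,\ldots,L$. Each layer $j$ is computed as $X^{(j)}=\Phi^{(j)}\bigl(X^{(i)}:i<j\bigr)$, using the layers $i<j$ that feed into it. For a block $X^{(\ell)}=X^{(\ell-1)}+\mathcal G^{(\ell)}(X^{(\ell-1)})$ this gives an edge $\ell-1\to\ell$ with Jacobian $I+J_{\mathcal G^{(\ell)}}$. Alternatively, the identity and $\mathcal G^{(\ell)}$ can be split into separate edges, as in \Cref{ex:residual}, where the skip enters the pre-activation and the edge Jacobian is $s^{(3)}S^{(31)}$. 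For every edge $i\to j$ I set $\cB_{i,j}:=\bigl(\partial X^{(j)}/\partial X^{(i)}\bigr)\transp$, with the partial derivative taken holding the other inputs of $\Phi^{(j)}$ fixed. I set $\cB_{i,j}=0$ when there is no edge. Since edges only go from $i$ to some $j>i$, $\cB$ is block strictly upper-triangular on blocks $1,\ldots,L$, and hence nilpotent by the argument of \Cref{thm:nilpotency}, Steps 3--4. That argument used only strict triangularity through the constraint that column index exceeds row index, so it adapts directly.

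\textbf{Step 2 (fixed point).} By the multivariate chain rule in the denominator layout of \eqref{eq:chain-wx}--\eqref{eq:chain-sigma}, the total derivative of $\loss$ with respect to $X^{(i)}$ for $i<L$ is
\[
X_*^{(i)}=\sum_{j>i}\cB_{i,j}X_*^{(j)}.
\]
Here $\loss$ depends on $X^{(i)}$ only through its direct successors. Together with $X_*^{(L)}=\nabla_{X^{(L)}}\loss=\bG_L$, this is exactly $(I-\cB)\Xs=\bG$, mirroring Steps 2--4 of \Cref{thm:global_fixed_point}. Because $I-\cB$ is invertible with inverse $\sum_{k=0}^{L-1}\cB^k$ (as in \Cref{cor:invertible}), $\Xs=\sum_{k}\cB^k\bG$.

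\textbf{Step 3 (path sum).} Now \Cref{lem:path-general} applies verbatim. The $\ell$-th block equals the sum over strictly decreasing chains $L=i_0>\cdots>i_m=\ell$ of $\cB_{i_m,i_{m-1}}\cdots\cB_{i_1,i_0}\bG_L$. Chains that use a non-edge contribute zero, so the sum runs over paths of the computational graph. Each weight is the product of the transposed edge Jacobians, which is the transpose of the product of Jacobians along the path. This is the claimed statement.

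The main obstacle is Step 2, specifically making the notion of edge Jacobian well-defined in the presence of the additive merge. The partials must be taken with respect to each direct input separately, which is what produces the sum over successors. If one instead lumps the residual block into a single edge $I+J_{\mathcal G}$, the expansion of that sum into separate identity and $\mathcal G$ paths has to be done explicitly, by distributing products over the sums. I would handle this by splitting each residual block into two parallel edges, introducing an auxiliary node for $\mathcal G^{(\ell)}$ if needed, so that every edge Jacobian is a single term.
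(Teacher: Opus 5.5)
Your proposal is correct and takes essentially the paper's route. Strict block upper-triangularity of the skip-augmented $\cB$ gives $\cB^{L}=0$ by the index-counting argument of \Cref{thm:nilpotency}, and \Cref{lem:path-general} then yields the path sum; in adapting that argument, the induction claim must be relaxed from $j=i+k$ to $j\ge i+k$, because the single-term collapse no longer holds. Your Step~2 (deriving $(I-\cB)\Xs=\bG$ for the augmented operator from the multivariate chain rule) and your observation that an additive block $I+J_{\mathcal G}$ must be split via an auxiliary node to produce separate identity and $\mathcal G$ paths make explicit two points the paper's short proof takes for granted.
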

\begin{proof}
Skip connections add extra nonzero off-bidiagonal blocks
$\cB_{i,j}$ ($j>i+1$) to $\cB$ (as in \Cref{ex:residual}), but
$\cB$ remains block strictly upper-triangular: any block strictly
upper-triangular matrix on $L$ blocks is nilpotent with
$\cB^{L}=0$, by the same index-counting argument as in the proof
of \Cref{thm:nilpotency} (a nonzero $(i,j)$ block of $\cB^{k}$
still requires a strictly increasing chain of $k$ hops from $i$ to
$j$, hence $j-i\ge k$, hence $k\le L-1$), regardless of which
superdiagonal blocks are populated. \Cref{lem:path-general} then
applies with $\ell$ ranging over the layers reachable from $L$,
giving the stated path sum.
\end{proof}
\subsection{Transfer learning and fine-tuning}
Transfer learning partitions the network into a \emph{frozen encoder}
(Block 1) and a \emph{trainable head} (Block 2). By the two-block
decomposition, the global F-adjoint decomposes cleanly, and the backward
pass through the frozen encoder can be skipped entirely.
\begin{corollary}[Gradient truncation for frozen layers]
\label{cor:truncation}
Suppose the weights $W^{(\ell)}$, $\ell=1,\ldots,k$, are \emph{frozen}
(excluded from the optimiser), so that only $W_*^{(\ell)}$ for
$\ell=k+1,\ldots,L$ are actually needed. Computing every required
$W_*^{(\ell)}$, $\ell=k+1,\ldots,L$, requires evaluating the two-step
recursion~\eqref{eq:back1}--\eqref{eq:back2} only for layers
$\ell=L,\ldots,k+1$, i.e.\ $2(L-k)$ steps in total, versus $2L$
steps for a network trained end to end; the backward pass may stop
as soon as $X_*^{(k)}$ has been produced.
\end{corollary}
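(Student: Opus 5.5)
The plan is to read off the dependency structure of the two-step recursion \eqref{eq:back1}--\eqref{eq:back2}, together with the weight-gradient formula \eqref{eq:weight-grad} of \Cref{thm:twostep}. Equivalently, we can use the block back-substitution picture of \Cref{prop:backsub}. The key observation is that for each $\ell\ge k+1$ we have $W_*^{(\ell)}=Y_*^{(\ell)}(X^{(\ell-1)})\transp$. Here $X^{(\ell-1)}$ is already available from the forward pass \eqref{eq:forward}, so the only backward quantity needed is $Y_*^{(\ell)}$.

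The first step is to show by downward induction on $\ell=L,L-1,\ldots,k+1$ that $Y_*^{(\ell)}$ and $X_*^{(\ell-1)}$ are determined by the seed $X_*^{(L)}$ and by the data $\{W^{(j)},\Dell_j\}_{j\ge\ell}$ alone. For the base case, $Y_*^{(L)}=X_*^{(L)}\had\sigma'(Y^{(L)})$ uses one application of \eqref{eq:back1}, and then $X_*^{(L-1)}=(W^{(L)})\transp Y_*^{(L)}$ uses one application of \eqref{eq:back2}. For the inductive step, given $X_*^{(\ell)}$, one application of \eqref{eq:back1} yields $Y_*^{(\ell)}$ and one of \eqref{eq:back2} yields $X_*^{(\ell-1)}$.

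The next step is the count. Each layer $\ell\in\{k+1,\ldots,L\}$ costs exactly two steps, so producing all $Y_*^{(\ell)}$ for $\ell\ge k+1$ takes $2(L-k)$ steps. This run ends by producing $X_*^{(k)}$ when $\ell=k+1$. No quantity indexed below $k$ enters any formula for $W_*^{(\ell)}$ with $\ell>k$, so the recursion may halt at that point. By contrast, the end-to-end pass runs $\ell=L,\ldots,1$ and takes $2L$ steps.

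Globally, the same fact can be phrased through \Cref{prop:path}: $X_*^{(\ell)}$ depends only on the blocks $\cB_{j,j+1}$ with $j\ge\ell$. Equivalently, the upper bidiagonal system $(I-\cB)\Xs=\bG$ can be truncated to its trailing block rows $k,\ldots,L$ without changing those components, because the system is upper triangular.

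The main subtlety is bookkeeping rather than any genuine obstacle. First, the final application of \eqref{eq:back2} at $\ell=k+1$ is counted even though $X_*^{(k)}$ is not itself needed for any trainable gradient; this step is what makes the total $2(L-k)$ rather than $2(L-k)-1$, and the statement's phrase ``stop as soon as $X_*^{(k)}$ has been produced'' matches this convention. Second, we must make sure the baseline $2L$ is counted the same way, including the separate multiplication producing $X_*^{(0)}$ noted before \Cref{def:stacked}.
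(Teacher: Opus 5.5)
Your proposal is correct and follows essentially the same counting argument as the paper. The paper phrases it through \Cref{prop:path} and \Cref{prop:backsub} as $L-k$ back-substitution steps of two sub-steps each, which is exactly your layerwise induction, and you also give the global form. Your bookkeeping remark is slightly sharper than the paper's own proof: the paper says the trainable gradients require $X_*^{(k)},\ldots,X_*^{(L)}$, whereas, as you correctly note, $X_*^{(k)}$ enters no $W_*^{(\ell)}$ with $\ell>k$, so the final multiplication by $(W^{(k+1)})^{\top}$ is counted only by convention.
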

\begin{proof}
By \Cref{prop:path}, block $\ell$ of $\Xs$ is obtained after exactly
$L-\ell$ applications of $\cB$ to $\bG$, realised by
\Cref{prop:backsub} as $L-\ell$ back-substitution steps counted
from layer $L$. Forming $W_*^{(\ell)}=Y_*^{(\ell)}(X^{(\ell-1)})\transp$
for every $\ell=k+1,\ldots,L$ requires exactly the adjoint variables
$X_*^{(k)},\ldots,X_*^{(L)}$, obtained in $L-k$ back-substitution
steps of two sub-steps each -- $2(L-k)$ steps -- and no computation
involving $W^{(1)},\ldots,W^{(k)}$ is ever needed.
\end{proof}
% ---------------------------------------------------------------
\section{Discussion}\label{sec:discussion}
% ---------------------------------------------------------------
While the F-adjoint framework provides a rigorous global operator theory
for feedforward networks, it is essential to explicitly delineate its
assumptions and the boundaries of its current applicability.
\subsection{Structural consequences of the equivalence}
\begin{corollary}[Gradient norm control]
\label{cor:gradnorm}
The operator norm of the sensitivity satisfies
$\norm{(I-\cB)^{-1}} \le \sum_{k=0}^{L-1} \norm{\cB}^{k}$.
If $\norm{\cB}<1$ (contractive regime), this bound is uniform in $L$.
Nilpotency ensures the sum is always \emph{finite}, even when
$\norm{\cB}\ge 1$.
\end{corollary}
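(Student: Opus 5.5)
The plan is to start from the finite Neumann series of \Cref{cor:invertible}, $(I-\cB)^{-1}=\sum_{k=0}^{L-1}\cB^{k}$, which holds because $\cB^{L}=0$ by \Cref{thm:nilpotency}. Fix any submultiplicative operator norm $\norm{\cdot}$, for instance the spectral norm induced by the Euclidean norm on $\R^{N}$. The triangle inequality then gives $\norm{(I-\cB)^{-1}}\le\sum_{k=0}^{L-1}\norm{\cB^{k}}$. By induction on $k$, submultiplicativity gives $\norm{\cB^{k}}\le\norm{\cB}^{k}$, with $\norm{\cB^0}=\norm{I}=1$. Combining these yields the stated bound.

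For the contractive regime, assume $\norm{\cB}=q<1$. Each partial geometric sum is bounded by the full series:
\[
\sum_{k=0}^{L-1}q^{k}=\frac{1-q^{L}}{1-q}\le\frac{1}{1-q},
\]
and the right-hand side does not depend on $L$. This is the sense of ``uniform in $L$''. The only care needed is to state that the uniformity is conditional on a depth-independent bound $q<1$. As $L$ changes, $\cB$ itself changes, so the statement should be read as: whenever $\norm{\cB}\le q<1$ for a fixed $q$, the sensitivity is bounded by $1/(1-q)$ for every depth.

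For the finiteness claim when $\norm{\cB}\ge1$, I will note that the sum has exactly $L$ terms, so it equals at most $L\norm{\cB}^{L-1}$. This is finite, in contrast with an infinite Neumann series, which would diverge in that regime. Nilpotency is exactly what removes the usual convergence hypothesis $\norm{\cB}<1$.

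A sharper remark is optional. The bidiagonal structure shows that $\norm{\cB}=\max_{\ell}\norm{(W^{(\ell+1)})\transp\Dell_{\ell+1}}$ for the spectral norm, which makes the contractive condition checkable layerwise. I expect no step to be a real obstacle. The only delicate point is interpretive: ``uniform in $L$'' must be stated with the depth-independent bound $q$ spelled out, as above.
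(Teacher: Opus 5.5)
Your proposal is correct and follows the paper's proof essentially verbatim: the finite Neumann series from \Cref{cor:invertible}, then the triangle inequality, then the geometric bound $1/(1-\norm{\cB})$ in the contractive case, and finiteness from the sum having exactly $L$ terms otherwise. Your explicit appeal to submultiplicativity ($\norm{\cB^k}\le\norm{\cB}^k$), which the paper leaves implicit, and your caveat that ``uniform in $L$'' requires a depth-independent bound $q<1$ tighten the paper's argument without changing its route.
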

\begin{proof}
By \Cref{cor:invertible}, $(I-\cB)^{-1}=\sum_{k=0}^{L-1}\cB^{k}$.
The triangle inequality gives the bound. If $\norm{\cB}<1$, the
sum is bounded by $1/(1-\norm{\cB})$; if $\norm{\cB}\ge1$, the sum
has exactly $L$ terms, so it is finite.
\end{proof}
\subsection{Assumptions and limitations}
\begin{assumption}[Global differentiability]\label{asm:smooth}
The framework strictly assumes that the activation function $\sigma$
is everywhere differentiable. Modern deep learning heavily relies on
non-smooth activations such as the Rectified Linear Unit
(ReLU, $\sigma(x) = \max(0, x)$). While ReLU is almost everywhere
differentiable, the points of non-differentiability require
subdifferential calculus. Extending the nilpotent operator $\cB$ to
handle set-valued Jacobians remains an open challenge.
\end{assumption}
\begin{assumption}[Deterministic forward pass]\label{asm:deterministic}
The F-propagation assumes a deterministic mapping. Architectures
employing stochastic depth, Dropout, or sampling-based layers
introduce randomness into the forward pass. The global operator
$\cB$ would need to be treated as a random matrix, complicating
the exact termination of the Neumann series.
\end{assumption}
\subsection{Research gaps and future extensions}
\paragraph{Attention mechanisms and transformers.}
The current framework models strictly local, layer-to-layer dependencies.
Transformer architectures~\cite{vaswani2017attention} rely on global
self-attention, where every token attends to every other token. This
results in a dense Jacobian block, breaking the upper-bidiagonal
structure of $\cB$. While $\cB$ remains strictly upper-triangular
(if causal masking is applied) and thus nilpotent, the loss of sparsity
means the Neumann series no longer collapses to single-path terms.
\paragraph{Graph Neural Networks (GNNs).}
In GNNs~\cite{kipf2016semi}, the computational graph is arbitrary, not
a linear chain. The global operator $\cB$ would be indexed by the nodes
of the graph rather than layer indices. Nilpotency would be guaranteed
if and only if the computational graph is a Directed Acyclic Graph (DAG).
\paragraph{Connections to existing theory.}
The global fixed-point equation $(I-\cB)\Xs=\bG$ parallels the
adjoint-state equations of optimal control~\cite{pontryagin1987mathematical}.
Neural ODEs~\cite{chen2018neural} use continuous adjoint methods, while
the discrete formulation here achieves exactness through nilpotency;
deep equilibrium models~\cite{bai2019deep} solve implicit equations
requiring iterative methods, while finite-depth explicit networks admit
closed-form solutions via the finite Neumann series. In addition, \cite{scurria2026} builds on the global activation vector, weight matrix, and nilpotency characterisation developed here to derive a global Lagrangian dynamics that unifies the physical formalism of least action with backpropagation.
% ---------------------------------------------------------------
\section{Conclusion}\label{sec:conclusion}
% ---------------------------------------------------------------
We have given a self-contained, rigorously expanded treatment of the
F-propagation and F-adjoint framework. We proved that classical
layerwise backpropagation, the global linear system $(I-\cB)\Xs=\bG$,
and its solution by the Neumann series or by block back-substitution
are three equivalent descriptions of a single object. The global backward
propagation operator $\cB$ is strictly block upper-triangular and
nilpotent of index at most $L$, ensuring exact termination of the
Neumann series after at most $L$ terms. This unified perspective
clarifies why residual connections improve gradient flow (by providing
multiple path contributions) and why frozen encoders can be skipped
during fine-tuning (requiring only $2(L-k)$ backward steps).
By explicitly identifying the assumptions of differentiability and
determinism, we have outlined clear research gaps for extending this
elegant operator-theoretic framework to non-smooth activations,
stochastic architectures, and dense attention mechanisms. The framework
also connects naturally to adjoint methods in optimal control and to
deep equilibrium models, suggesting potential for cross-fertilisation
between these areas.
% --------------------------------------------------------------- % References % ---------------------------------------------------------------
\bibliographystyle{plain}
\bibliography{references}

@inproceedings{bai2019deep,
	author    = {Bai, Shaojie and Kolter, J. Zico and Koltun, Vladlen},
	title     = {Deep Equilibrium Models},
	booktitle = {Advances in Neural Information Processing Systems},
	volume    = {32},
	year      = {2019}
}

@article{basheer2000artificial,
	author  = {Basheer, I. A. and Hajmeer, M.},
	title   = {Artificial Neural Networks: Fundamentals, Computing, Design, and Application},
	journal = {Journal of Microbiological Methods},
	volume  = {43},
	number  = {1},
	pages   = {3--31},
	year    = {2000}
}

@article{baydin2018automatic,
	author  = {Baydin, A. G. and Pearlmutter, B. A. and Radul, A. A. and Siskind, J. M.},
	title   = {Automatic Differentiation in Machine Learning: A Survey},
	journal = {Journal of Machine Learning Research},
	volume  = {18},
	number  = {153},
	pages   = {1--43},
	year    = {2018}
}

@article{boughammoura2023two,
	author  = {Boughammoura, A.},
	title   = {A Two-Step Rule for Backpropagation},
	journal = {International Journal of Informatics and Applied Mathematics},
	volume  = {6},
	number  = {1},
	pages   = {57--69},
	year    = {2023}
}

@article{boughammoura2023,
	author  = {Boughammoura, A.},
	title   = {Backpropagation and F-adjoint},
	journal = {arXiv preprint},
	eprint  = {2304.13820},
	archivePrefix = {arXiv},
	year    = {2023}
}

@article{boughammoura2024learning,
	author  = {Boughammoura, A.},
	title   = {Learning by the F-adjoint},
	journal = {arXiv preprint},
	eprint  = {2407.11049},
	archivePrefix = {arXiv},
	year    = {2024}
}

@inproceedings{chen2018neural,
	author    = {Chen, Ricky T. Q. and Rubanova, Yulia and Bettencourt, Jesse and Duvenaud, David K.},
	title     = {Neural Ordinary Differential Equations},
	booktitle = {Advances in Neural Information Processing Systems},
	volume    = {31},
	year      = {2018}
}

@book{goodfellow2016deep,
	author    = {Goodfellow, Ian and Bengio, Yoshua and Courville, Aaron},
	title     = {Deep Learning},
	publisher = {MIT Press},
	address   = {Cambridge, MA},
	year      = {2016}
}

@book{griewank2008evaluating,
	author    = {Griewank, Andreas and Walther, Andrea},
	title     = {Evaluating Derivatives: Principles and Techniques of Algorithmic Differentiation},
	publisher = {SIAM},
	address   = {Philadelphia},
	year      = {2008}
}

@inproceedings{he2016deep,
	author    = {He, Kaiming and Zhang, Xiangyu and Ren, Shaoqing and Sun, Jian},
	title     = {Deep Residual Learning for Image Recognition},
	booktitle = {Proceedings of the IEEE Conference on Computer Vision and Pattern Recognition},
	pages     = {770--778},
	year      = {2016}
}

@book{hinze2008optimization,
	author    = {Hinze, Michael and Pinnau, Rene and Ulbrich, Michael and Ulbrich, Stefan},
	title     = {Optimization with PDE Constraints},
	publisher = {Springer},
	address   = {Dordrecht},
	year      = {2008}
}

@article{kipf2016semi,
	author  = {Kipf, Thomas N. and Welling, Max},
	title   = {Semi-supervised Classification with Graph Convolutional Networks},
	journal = {arXiv preprint},
	eprint  = {1609.02907},
	archivePrefix = {arXiv},
	year    = {2016}
}

@article{lecun1989backpropagation,
	author  = {LeCun, Yann and Boser, Bernhard and Denker, John S. and Henderson, Donnie and Howard, Richard E. and Hubbard, Wayne and Jackel, Lawrence D.},
	title   = {Backpropagation Applied to Handwritten Zip Code Recognition},
	journal = {Neural Computation},
	volume  = {1},
	number  = {4},
	pages   = {541--551},
	year    = {1989}
}

@article{mcculloch1943,
	author  = {McCulloch, Warren S. and Pitts, Walter},
	title   = {A Logical Calculus of the Ideas Immanent in Nervous Activity},
	journal = {Bulletin of Mathematical Biophysics},
	volume  = {5},
	pages   = {115--133},
	year    = {1943}
}

@book{pontryagin1987mathematical,
	author    = {Pontryagin, L. S. and Boltyanskii, V. G. and Gamkrelidze, R. V. and Mishchenko, E. F.},
	title     = {The Mathematical Theory of Optimal Processes},
	publisher = {Gordon and Breach},
	address   = {New York},
	year      = {1987}
}

@article{rumelhart1986,
	author  = {Rumelhart, David E. and Hinton, Geoffrey E. and Williams, Ronald J.},
	title   = {Learning Representations by Back-propagating Errors},
	journal = {Nature},
	volume  = {323},
	number  = {6088},
	pages   = {533--536},
	year    = {1986}
}

@article{scurria2026,
	author  = {Scurria, Anthony E.},
	title   = {A Physical Theory of Backpropagation: Exact Gradients from the Least-Action Principle},
	journal = {arXiv preprint},
	eprint  = {2602.02281},
	archivePrefix = {arXiv},
	year    = {2026}
}

@inproceedings{vaswani2017attention,
	author    = {Vaswani, Ashish and Shazeer, Noam and Parmar, Niki and Uszkoreit, Jakob and Jones, Llion and Gomez, A. N. and Kaiser, Lukasz and Polosukhin, Illia},
	title     = {Attention Is All You Need},
	booktitle = {Advances in Neural Information Processing Systems},
	volume    = {30},
	year      = {2017}
}

@inproceedings{veit2016residual,
	author    = {Veit, Andreas and Wilber, Michael J. and Belongie, Serge},
	title     = {Residual Networks Behave Like Ensembles of Relatively Shallow Networks},
	booktitle = {Advances in Neural Information Processing Systems},
	volume    = {29},
	year      = {2016}
}

@book{ye2022,
	author    = {Ye, Jong Chul},
	title     = {Geometry of Deep Learning},
	publisher = {Springer},
	address   = {Singapore},
	year      = {2022}
}
\end{document}